\documentclass[letterpaper, 10 pt, conference]{ieeeconf}  %

\IEEEoverridecommandlockouts                              %

\usepackage{cite}
\usepackage{amsmath,amssymb,amsfonts,thmtools,mathrsfs}
\usepackage{graphicx}
\usepackage{calc}
\usepackage{textcomp}
\usepackage{comment}
\newlength{\oldlabelindent}
\makeatletter
\let\labelindent\@undefined
\makeatother
\usepackage{enumitem}
\usepackage{subcaption}
\usepackage{siunitx}
\usepackage{svg}
\usepackage{siunitx}
\usepackage[colorinlistoftodos]{todonotes}
\usepackage[colorlinks=false,]{hyperref}
\usepackage[capitalise,nameinlink]{cleveref}
\usepackage{algorithm}
\usepackage{algpseudocode}
\usepackage{color}
\usepackage{xcolor}
\usepackage{booktabs}
\usepackage{multirow}
\usepackage{svg}

\declaretheorem[name=Theorem,numberwithin=section]{theorem}
\declaretheorem[name=Proposition,numberwithin=section]{proposition}
\declaretheorem[name=Lemma,numberwithin=section]{lemma}
\declaretheorem[name=Corollary,numberwithin=section]{corollary}

\declaretheorem[name=Definition,numberwithin=section]{definition}

\DeclareMathOperator{\qmin}{HMin_\gamma}
\DeclareMathOperator{\qmax}{HMax_\gamma}

\newlist{definitionlist}{enumerate}{1}
\newlist{corollarylist}{enumerate}{1}
\newlist{lemmalist}{enumerate}{1}
\newlist{theoremlist}{enumerate}{1}
\newlist{propositionlist}{enumerate}{1}
\newlist{feature}{enumerate}{1}
\setlist[corollarylist]{label=\roman{corollarylisti}, ref=\thecorollary.\roman{corollarylisti},noitemsep}
\setlist[definitionlist]{label=\roman{definitionlisti}, ref=\thedefinition.\roman{definitionlisti},noitemsep}
\setlist[propositionlist]{label=\roman{propositionlisti}, ref=\theproposition.\roman{propositionlisti},noitemsep}
\setlist[lemmalist]{label=\roman{lemmalisti}, ref=\thelemma.\roman{lemmalisti},noitemsep}
\setlist[theoremlist]{label=\roman{theoremlisti}, ref=\thetheorem.\roman{theoremlisti},noitemsep}
\setlist[feature]{label=(\alph*), ref=(\alph*),noitemsep}

\crefname{theorem}{Theorem}{Theorems}
\crefname{proposition}{Proposition}{Propositions}
\crefname{lemma}{Lemma}{Lemmas}
\crefname{definition}{Definition}{Definitions}
\crefname{corollary}{Corollary}{Corollaries}
\crefname{listprop}{Property}{Properties}
\crefname{example}{Example}{Examples}
\crefname{remark}{Remark}{Remarks}
\crefname{assumption}{Assumption}{Assumptions}
\crefname{claim}{Claim}{Claims}
\crefname{conjecture}{Conjecture}{Conjectures}
\crefname{fact}{Fact}{Facts}
\crefname{problem}{Problem}{Problems}
\crefname{solution}{Solution}{Solutions}
\crefname{criterion}{Criterion}{Criteria}

\crefname{listprop}{Property}{Properties}
\addtotheorempostheadhook[corollary]{\crefalias{corollarylisti}{corollary}}
\addtotheorempostheadhook[definition]{\crefalias{definitionlisti}{definition}}
\addtotheorempostheadhook[lemma]{\crefalias{lemmalisti}{lemma}}
\addtotheorempostheadhook[theorem]{\crefalias{theoremlisti}{theorem}}
\addtotheorempostheadhook[proposition]{\crefalias{propositionlisti}{proposition}}
\crefname{featurei}{Feature}{Features}
\crefname{figure}{Fig.}{Figs.}

\DeclareMathOperator{\sgn}{sgn}

\makeatletter
\newcommand*{\bigboxplus}{%
  \DOTSB
  \mathop{\vphantom{\bigoplus}\mathpalette\matt@bigboxplus\relax}%
  \slimits@
}
\newcommand\matt@bigboxplus[2]{%
  \vcenter{\m@th\hbox{\resizebox{\widthof{$#1\bigoplus$}}{!}{$\boxplus$}}}%
}
\makeatother

\renewcommand{\proof}[1][]{%
  \noindent\hspace{2em}%
  {\itshape Proof\if\relax\detokenize{#1}\relax: \else\ of #1: \fi}%
}

\newcommand{\qedsymbol}{\hfill$\blacksquare$}

\title{\LARGE \bf
Hölder Signed Distance: A Differentiable, Signed, Parallelizable Metric for Robotics
}

\author{Felipe Bartelt, Ali Umut Kaypak, Anthony Tzes, Farshad Khorrami,\\Luciano C. A. Pimenta, Vinicius M. Gon\c{c}alves
\thanks{}%
}

\begin{document}

\maketitle
\thispagestyle{empty}
\pagestyle{empty}

\begin{abstract}
Computing distances between sets is essential in robotic motion planning and control, where differentiable gradients enable real-time optimization. The Euclidean Signed Distance Function (SDF), however, is not differentiable everywhere, and existing alternatives often sacrifice differentiability, sign information, or computational efficiency. In this letter, we introduce a novel differentiable signed distance between convex polyhedra. To this end, we first propose differentiable versions of the minimum and maximum operators, termed the \emph{Hölder minimum} and \emph{Hölder maximum}. We then replace the original min-max operators in the classical SDF formulation, yielding the \emph{Hölder signed distance}. Unlike prior differentiable distance formulations that rely on iterative algorithms, our approach is computed in closed form, eliminating convergence issues while remaining naturally amenable to GPU parallelization. We validate the practical advantages and computational performance of the proposed distance through runtime comparisons with existing approaches. We also present a robotic manipulator experiment, demonstrating its suitability for applications in control.
\end{abstract}
\begin{keywords}
	Collision Avoidance, Computational Geometry, Constrained Motion Planning
\end{keywords}

\section{Introduction}
Efficient and safety-aware robot operation relies heavily on computing distances between sets for motion planning, collision detection, and control. Many safe‑control techniques require not only the distance value but also its derivatives. Control barrier functions (CBFs), for instance, heavily rely on the gradient or even higher derivatives of a distance-like function to enforce safety constraints \cite{Singletary2021,Singletary2022,Thirugnanam2022cbf,Thirugnanam2022,Thirugnanam2023,Ferreira2026}. Failing to comply with this differentiability property when using CBFs, for instance, may cause practical issues like very large control inputs \cite{Goncalves2024}, or very noisy control inputs \cite{Goncalves2026}. 

The traditional Euclidean distance function, however, is not differentiable everywhere \cite{Escande2014}. Non-differentiability usually appears when the \emph{witness points} (the pair of points, one on each object, that attains the minimum distance) are not unique. \Cref{fig:square-example-hdsdf-sdf} illustrates such non‑differentiability for two moving squares. Overcoming this differentiability problem has motivated two main strategies: (i) restricting geometry to shapes with well‑defined Euclidean distance derivatives \cite{Escande2007,Escande2014,Tracy2022}, or (ii) adopting alternative metrics that are differentiable by construction \cite{Xu2020,Tracy2023,Goncalves2024gpu,Wei2024, Goncalves2024, Goncalves2026} (see the review in \cite{Goncalves2024}).

Following the second strategy, \cite{Goncalves2024} introduced a smooth distance between convex sets, and applied to second‑order kinematic control. Their method generalized von Neumann's alternating projection algorithm to an iterative distance computation, but it remains cumbersome to evaluate and does not guarantee that the metric vanishes (or becomes negative) when the sets overlap. \cite{Goncalves2026} removed these drawbacks, delivering a far simpler metric that correctly vanishes upon overlap. More recently, \cite{Nunes2026} developed a differentiable metric between a point and a 1D curve and used it to construct Guiding Vector Fields for smooth path following.
\begin{figure}[t]
    \centering
    \includegraphics[trim=4px 8px 6px 4px, clip, width=\columnwidth]{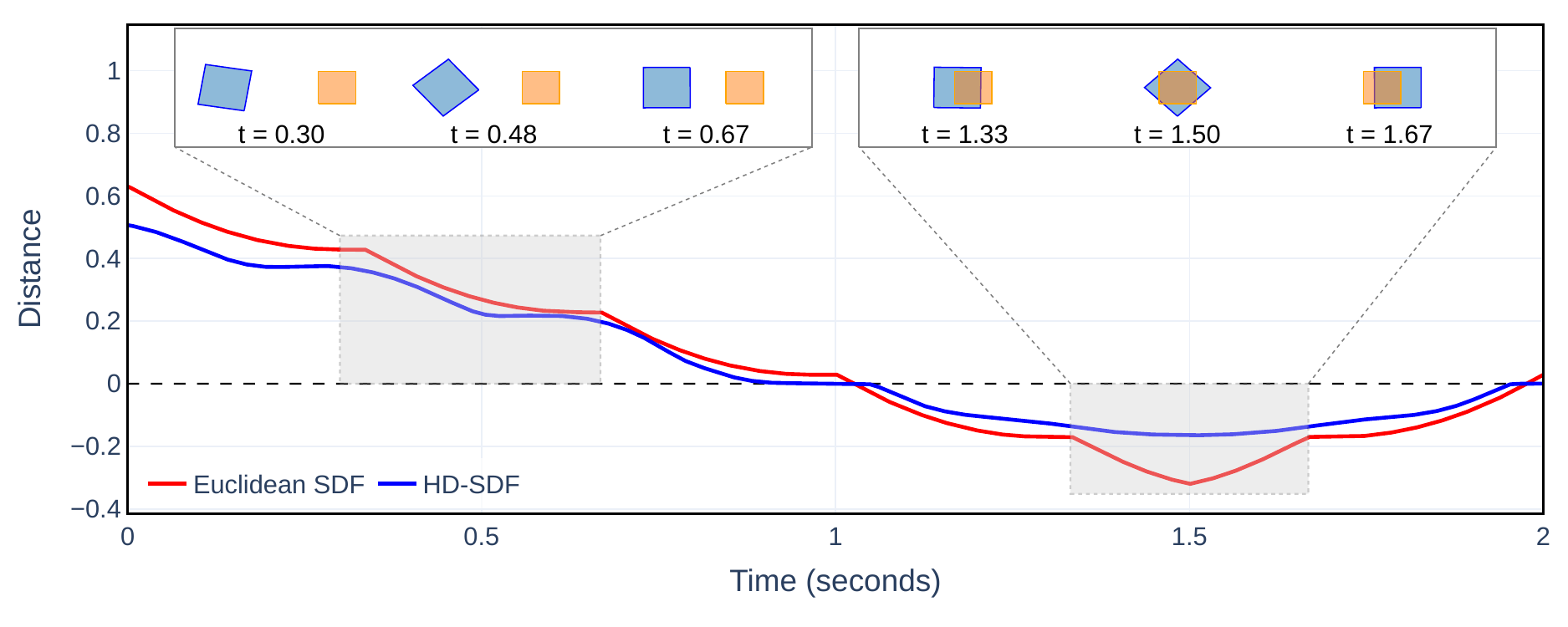}
    \caption{A square translates and rotates past another. The Euclidean SDF exhibits non‑differentiable kinks (highlighted with snapshots) where witness points or separating normals are non‑unique; our HD‑SDF remains smooth throughout and never signals false separation.}
    \label{fig:square-example-hdsdf-sdf}
\end{figure}

All the aforementioned metrics are \emph{unsigned}: they are only nonnegative and do not indicate penetration depth. While unsigned distances are sufficient for avoiding collisions, many safety‑critical applications require a \emph{Signed Distance Function} (SDF) that distinguishes separation from penetration. CBF formulations, for example, can directly enforce non‑penetration using a signed distance \cite{Chen2026,Wu2025}. In planning, a signed distance provides penetration depth when collisions are unavoidable or when guiding a robot out of an already violated region. Therefore, a differentiable distance that is both signed and non‑iterative would serve a broad spectrum of robotic tasks. Furthermore, optimization‑based motion planners, such as CHOMP \cite{Ratliff2009}, similarly need SDF gradients to deform trajectories away from obstacles. Without analytic gradients, practitioners resort to discretized distance fields \cite{Gao2018,Han2019}, smoothing approximations \cite{Maric2024}, or surrogate obstacle potentials \cite{Zhou2021}. While effective, these workarounds can compromise geometric fidelity near obstacle boundaries, leading to conservative paths or unreliable gradient information. Thus, an analytic differentiable distance, even unsigned, is highly attractive for planning as well.

This work improves on \cite{Goncalves2026} providing a \emph{signed distance} between two objects that is differentiable everywhere. The core idea is to take the classic min‑max formulation of the Euclidean SDF and replace the hard minimum and maximum operators with novel differentiable substitutes that we introduce, together with a few additional modifications. The benefit of our approach is already visible in \Cref{fig:square-example-hdsdf-sdf}, where the HD‑SDF remains smooth at configurations that cause non‑differentiability of the Euclidean SDF. The \textbf{contributions} of this paper are: 
\begin{itemize}
    \item We present almost‑differentiable minimum and maximum operators: the \emph{H\"older Minimum and Maximum}, denoted $\qmin$ and $\qmax$, that preserve key properties of the classical min and max (e.g., associativity, commutativity) while being differentiable everywhere except on a zero‑measure set, where a simple nonlinear function restores differentiability (as opposed to the traditional min and max).
    
    \item Using these operators, we construct and analyze a differentiable signed distance function $D_{\gamma,\epsilon}(\mathcal{A},\mathcal{B})$ between two convex polyhedra $\mathcal{A}$, $\mathcal{B}$, that we call \emph{H\"older Differentiable Signed Distance Function}. The distance is $\gamma$-times differentiable, with $\gamma$ controlling the order of differentiability. In contrast to \cite{Goncalves2024, Goncalves2026}, this metric is \emph{signed}, providing penetration depth. Moreover, unlike the iterative schemes of \cite{Goncalves2024, Goncalves2026}, our distance is computed in closed form, avoiding convergence issues and enabling straightforward parallelization on GPUs. The metric is \emph{conservative}: it may yield a non‑positive value even in the absence of collision, but such false positives are rare and occur only when the objects are near contact; the degree of conservativeness can be traded for extra computation.
    
    \item We demonstrate the practical advantages of our distance in simulations and experiments, comparing it with the exact SDF and the differentiable distance from \cite{Goncalves2026}.
\end{itemize}

 For readability, all proofs and auxiliary lemmas are deferred to Appendix \ref{appendix:proofs}.

\subsection{Notation}
Regarding notation, we adopt $\mathbb{R}_+$ for the nonnegative real numbers and $\mathbb{R}_-$ for the nonpositive real numbers. Every vector in this work must be regarded as a column vector, and the gradient $\nabla f$ of a scalar function $f$ is a row vector. The norm $\|\cdot\|$ represents the Euclidean norm. For a polyhedron $\mathcal{A}$, $\mathcal{V}(\mathcal{A})$ is the set of its vertices. Given two polyhedra $\mathcal{A}$ and $\mathcal{B}$, $\mathcal{A}-\mathcal{B} \triangleq \{a-b\;\vert\; a\in \mathcal{A},b\in\mathcal{B}\}$ is the Minkowski difference. We use $\pm \mathcal{A}$ as a shorthand for $\mathcal{A}\cup(\{0\}-\mathcal{A})$. The function $\sgn:\mathbb{R}\to\{-1,0,1\}$ is given by $\sgn(x) = 1$ if $x>0$, $\sgn(x) = 0$ if $x=0$, and $\sgn(x) = -1$ if $x<0$. Let $x\in\mathbb{R}^n$, we denote by $\|x\|_p$ the \emph{p-Hölder norm}\footnote{This naming is inspired by the Hölder mean\cite{sykora2009}; the norm differs from the mean only by a constant factor $n^{-1/p}$.}, given by $\|x\|_p \triangleq \left(\sum_i |x_i|^p\right)^{1/p}$, which is valid when $p\neq0$---meaning it is valid even when $p$ is negative\footnote{When $p<1$ this is not technically a norm, but we will not need this fact and hence we will adopt the nomenclature.}. Furthermore, we denote by $(x)_{+}$ the vector obtained from $x$ by zeroing all the negative entries and keeping all the other intact. 

\section{H\"older Max/Min Functions}\label{sec:smooth-min-max-functions}
Our work builds upon a min–max formulation of the signed Euclidean distance. The distance is defined in the same structural manner, but we replace the standard $\min$ and $\max$ operators with almost-everywhere differentiable surrogates introduced in this section, that we call \emph{$\gamma$-H\"older Minimum} ($\qmin$) and \emph{$\gamma$-H\"older Maximum} ($\qmax$), being parametrized by a scalar $\gamma > 0$ that controls the degree of differentiability. Although the classical minimum and maximum are themselves almost-everywhere differentiable, our formulation concentrates the points of non-differentiability into a more convenient set. When compared to their traditional counterparts, this structure makes it much easier to eliminate such non-differentiabilities later via a nonlinear transformation. Furthermore, the $\qmin$ and $\qmax$ are generalizations of the traditional minimum and maximum, respectively, since they can be recovered by setting the parameter $\gamma$ to infinite.

\subsection{H\"older Minimum}\label{sec:holder-min}

We start by defining the \emph{H\"older Minimum}\footnote{The name arises from the Hölder norm representation in \Cref{lemma:aczel}.} (referred to as $\qmin$), which generalizes the usual minimum:
\begin{definition}\label{def:smooth-min-operator}

 For an integer $\gamma \geq 1$ and two scalars $x$ and $y$, we define the function $\mu_\gamma:\mathbb{R}_+\times\mathbb{R}_+\to\mathbb{R}_+$ as\footnote{Note that formally $\mu_\gamma(x, 0)$ or $\mu_\gamma(0, y)$ are not defined. In that case, one can easily show that the limits exist, and we can set $\mu_\gamma(x, 0) \triangleq 0$, $\mu_\gamma(0, y) \triangleq 0$ for all $x, y \geq 0$.}
    \begin{align*}
        \mu_\gamma(x, y) \triangleq \bigl(x^{-(\gamma+1)} + y^{-(\gamma+1)}\bigr)^{-1/(\gamma+1)}.
    \end{align*}
    
    Let $x, y\in\mathbb{R}$, then the \emph{$\gamma$-H\"older Minimum Operator} ($\qmin$) $\oplus:\mathbb{R}\times\mathbb{R}\to\mathbb{R}$ is defined as
    \begin{align*}
        x\oplus y \triangleq \begin{cases}
            \hspace{8pt} \mu_\gamma(x, y), & x,y\ge0,\\
        -\mu_\gamma(-x^{-1}, -y^{-1})^{-1}, & x,y<0,\\
        \hspace{8pt} \min(x, y),& \text{otherwise}.\tag*{\qedsymbol}
        \end{cases}
    \end{align*}
\end{definition}
One can check by inspection that, when $\gamma \rightarrow \infty$, the traditional minimum is recovered.

Although this operation has a piecewise definition, it turns out to be very amenable to differentiation. More precisely, it is differentiable almost everywhere (as \Cref{lem:smooth-min-differentiable} below shows), and the points of non-differentiability are easily removable by the application of a non-linear function in the output (as \Cref{lemma:diffcompos} below shows). This fact will be essential in the formulation of our differentiable signed distance function.

We now derive some properties of the $\qmin$ which are shared with the usual minimum.
\begin{proposition}[Basic properties of $\qmin$]\label{lem:smooth-min-oplus-basic-properties}
    Let $x,y,z$ be scalars and $\{x_i\}_{i=1}^n$ be a set of scalars. The $\oplus$ operator $\qmin$ enjoys the following properties
    \begin{propositionlist}
        \item Commutativity: $x \oplus y = y \oplus x$;\label{property:smooth-min-oplus-commutative}
        \item Associativity: $(x\oplus y)\oplus z = x\oplus (y\oplus z)$, which allows the shorthand notation $\bigoplus_{i} x_i$ for the $\qmin$ of a set $\{x_i\}_{i=1}^n$;\label{property:smooth-min-oplus-associative}
        \item Sign preservation: $\sgn\bigl(\min_i x_i\bigl) = \sgn\bigl(\bigoplus_{i} x_i\bigr)$. \label{property:smooth-min-oplus-sign-preservation}
        \item Limit case: when $\gamma \rightarrow +\infty$, $\bigoplus_i x_i = \min_i x_i$. \qedsymbol
        \label{property:smooth-min-limit}
    \end{propositionlist}
\end{proposition}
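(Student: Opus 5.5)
Write $p=\gamma+1$. The plan is to replace the piecewise definition by a single algebraic description that makes all four properties transparent. On the nonnegative quadrant, $\mu_\gamma(x,y)=\|(x,y)\|_{-p}$. Equivalently, with $\phi(t)=t^{-p}$ (so $\phi(0)=+\infty$), we have $x\oplus y=\phi^{-1}(\phi(x)+\phi(y))$. This is a quasi-arithmetic (Kolmogorov--Nagumo) sum, and the convention $\mu_\gamma(x,0)=0$ matches $\phi(0)=\infty$.

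On the negative quadrant, set $\psi(t)=-1/t$, which maps $(-\infty,0)$ bijectively and increasingly onto $(0,\infty)$. The definition reads $x\oplus y=\psi^{-1}\bigl(\mu_\gamma(\psi(x),\psi(y))\bigr)$, i.e. a conjugate of the nonnegative case. Since $\psi$ is increasing, the negative branch inherits every order-theoretic and algebraic property of $\mu_\gamma$.

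\textbf{Commutativity.} This is immediate from the symmetry of each branch, including the $\min$ branch.

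\textbf{Associativity.} I expect this to be the main obstacle, because the three branches must be checked against each other. I would split by the signs of $x,y,z$.
\begin{itemize}
\item If all three are $\ge0$, both sides equal $\bigl(x^{-p}+y^{-p}+z^{-p}\bigr)^{-1/p}$, with zeros propagating to $0$ consistently.
\item If all three are $<0$, the same holds after conjugating by $\psi$, since each branch value stays negative.
\end{itemize}
The mixed case needs one observation: whenever one argument is $<0$ and the other is $\ge0$, the result is the negative argument. Moreover, a partial result never changes sign class, since $\mu_\gamma\ge0$ and the negative branch is $<0$. So if some entries are negative, every bracketing gives the $\oplus$ of the negative entries alone; the nonnegative entries are absorbed by the $\min$ branch. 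For example, with $x<0\le y,z$ we get $x\oplus(y\oplus z)=x$ because $y\oplus z\ge0$, and $(x\oplus y)\oplus z=x\oplus z=x$. With two negatives, say $x,y<0\le z$, both bracketings reduce to $x\oplus y$. This covers all sign patterns by symmetry, and induction gives $\bigoplus_i x_i$.

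\textbf{Sign preservation.} By the absorption rule above, $\bigoplus_i x_i$ is the $\oplus$ of the negative entries when any exist, which is negative, matching $\min_i x_i<0$. If all entries are $\ge0$, the result is $\|x\|_{-p}$. This is $0$ exactly when some $x_i=0$, by the convention, and positive otherwise, which matches $\sgn(\min_i x_i)$.

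\textbf{Limit case.} For positive entries, $\|x\|_{-p}=\bigl(\max_i x_i^{-1}\bigr)^{-1}\cdot\bigl(\sum_i (x_i^{-1}/\max_j x_j^{-1})^{p}\bigr)^{-1/p}$. The sum lies in $[1,n]$, so the second factor tends to $1$ as $p\to\infty$ and the whole expression tends to $\min_i x_i$. Zeros give $0$ for every $\gamma$. For negative entries, apply the same limit to $\psi(x_i)$ and use the continuity and monotonicity of $\psi^{-1}$: $\psi^{-1}(\min_i\psi(x_i))=\min_i x_i$. The mixed case reduces to the all-negative case by absorption.
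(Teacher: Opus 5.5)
Your proposal is correct and follows essentially the paper's route. The paper proves associativity by checking the $2^3$ sign patterns of $x,y,z$ ``by inspection'' and calls the remaining properties immediate, which rests on the appendix closed form $\bigoplus_i x_i=\|x\|_{-(\gamma+1)}$, $0$, or $-\|(-x)_{+}\|_{\gamma+1}$. Your conjugation by $\psi(t)=-1/t$ reproduces exactly that closed form on the negative branch, and together with your absorption rule for mixed signs it is a clean, explicit organization of the same case analysis.
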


As $\bigoplus$ is a multivariate function, a formal definition of differentiability is required before any discussion on its differentiability.
\begin{definition}
    A function $f: \mathbb{R}^n \to \mathbb{R}$ is said $k$ times differentiable in an open set $\mathcal{A} \subset \mathbb{R}^n$ if for any $(x_1,\dots,x_n) \in \mathcal{A}$ and any $k_1,\dots,k_n$ such that $k = \sum_i k_i$ , the following $k^{th}$ order mixed partial derivative is is continuous in $\mathcal{A}$:
    \begin{align*}
        \frac{\partial^{k} f}{\partial x_1^{k_1} \dots \partial x_n^{k_n}}(x_1,\dots,x_n).\tag*{\qedsymbol}
    \end{align*}
\end{definition}

With this definition, we can state the next Lemma, that shows that $f(x) = \bigoplus_{i=1}^n x_i$ is differentiable everywhere except (possibly) when $\min_i x_i=0$, which by \cref{property:smooth-min-oplus-sign-preservation} is equivalent to $f(x) =0$.
\begin{proposition}\label{lem:smooth-min-differentiable}
    Let $\oplus$ be the $\qmin$. The function $f(x) = \bigoplus_{i=1}^n x_i$ is continuous everywhere, and it is $\gamma$-times differentiable\footnote{The condition $\gamma \geq 1$ in \Cref{def:smooth-min-operator} guarantees that the operator is at least once differentiable} on the open set $\mathcal{D}=\{x \in \mathbb{R}^n \ \vert \ \min_i x_i \neq 0\} = \{x \in \mathbb{R}^n \ \vert \ f(x) \neq 0\}$. \hfill\qedsymbol
\end{proposition}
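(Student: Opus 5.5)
Using associativity (\Cref{property:smooth-min-oplus-associative}), I will first obtain a closed form for $f(x)=\bigoplus_{i=1}^n x_i$ on each sign region, and then read off continuity and differentiability from it. Set $p=\gamma+1$. If all $x_i>0$, induction on $n$ gives $f(x)=\bigl(\sum_i x_i^{-p}\bigr)^{-1/p}=\|x\|_{-p}$. If all $x_i<0$, then $-\mu_\gamma(-x^{-1},-y^{-1})^{-1}=-\bigl((-x)^{p}+(-y)^{p}\bigr)^{1/p}$, so induction gives $f(x)=-\bigl(\sum_i |x_i|^{p}\bigr)^{1/p}=-\|x\|_p$. In the remaining cases some $x_i<0$ and some $x_j\ge 0$. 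Here I will show by induction that $f(x)=-\|(-x)_+\|_p$, where only the negative entries contribute. The key case is $a<0\le b$: then $a\oplus b=\min(a,b)=a$, so nonnegative entries are simply absorbed once a negative entry is present. Zeros are handled by the convention $\mu_\gamma(\cdot,0)=0$. When all entries are $\ge0$ and one vanishes, $f=0$.

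For continuity, note that $\|x\|_{-p}\to 0$ as any $x_i\to0^+$, because $x_i^{-p}\to\infty$. Both negative formulas tend to $0$ as the negative entries tend to $0$. All formulas are continuous inside their regions, and they agree on the interfaces. Concretely, on the boundary between ``some negative'' and ``all nonnegative'', $f$ is $-\|(-x)_+\|_p$ on one side and $0$ or $\|x\|_{-p}$ on the other. A negative entry can only approach the boundary by tending to $0$, in which case both sides tend to the common value. I will check this carefully by bounding $\|x\|_{-p}\le \min_i x_i$ and $\|(-x)_+\|_p\le n^{1/p}\max_i|\min(x_i,0)|$.

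For differentiability on $\mathcal D$ I split it into $\mathcal D_+=\{\min_i x_i>0\}$ and $\mathcal D_-=\{\min_i x_i<0\}$. On $\mathcal D_+$, $f=\|x\|_{-p}$ is a composition of smooth maps, since the inner sum is positive and $x_i^{-p}$ is smooth for $x_i>0$; so $f$ is $C^\infty$ there. On $\mathcal D_-$, $f(x)=-\bigl(\sum_i g(x_i)\bigr)^{1/p}$ with $g(t)=(\max(-t,0))^{p}$. The inner sum is strictly positive on $\mathcal D_-$, so the outer power is smooth there. The only issue is the regularity of $g$ at $t=0$, which is exactly where entries switch between being counted and absorbed. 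Since $g(t)=|t|^{\gamma+1}$ for $t<0$ and $0$ for $t\ge0$, its derivatives up to order $\gamma$ are $c_k|t|^{\gamma+1-k}$ on the left and $0$ on the right, and these match continuously at $0$. Hence $g\in C^\gamma(\mathbb R)$, and since $f$ is a sum of functions of single variables composed with a smooth outer map, all mixed partials of order $\gamma$ exist and are continuous on $\mathcal D_-$.

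The main obstacle is this last step: verifying that the global formula $-\|(-x)_+\|_p$ is valid throughout $\mathcal D_-$, including at points where some entries are exactly $0$ or positive. It is precisely this formula that lets the piecewise $\min$ branch be absorbed into a single $C^\gamma$ expression. Finally, the equality $\mathcal D=\{f\neq0\}$ follows directly from \Cref{property:smooth-min-oplus-sign-preservation}, and openness follows from the continuity of $\min_i x_i$.
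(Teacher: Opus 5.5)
Your proposal is correct and follows essentially the same route as the paper: you re-derive the closed form of \Cref{lemma:aczel} by induction, use smoothness of $\|x\|_{-(\gamma+1)}$ on $\{\min_i x_i>0\}$, and on $\{\min_i x_i<0\}$ combine the $C^\gamma$ regularity of $\sum_i (-x_i)_+^{\gamma+1}$ with smoothness of $s\mapsto -s^{1/(\gamma+1)}$ for $s>0$. The only minor differences are that you get continuity from explicit bounds on the closed form, whereas the paper deduces it from continuity of the binary $\oplus$ plus associativity, and that your one-variable check that $t\mapsto(\max(-t,0))^{\gamma+1}$ is $C^\gamma$ proves a step the paper only asserts.
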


Although $f(x) = \bigoplus_{i=1}^{n} x_i$ may be non-differentiable when $f(x) = 0$, one can apply simple nonlinear transformation $\phi$ to obtain a new function $g(x) = \phi(f(x))$ that is differentiable everywhere, while largely preserving the shape of $f$. As a motivating example, let $\phi(u) = u^k$ with $k$ odd. The first partial derivative with respect to $x_1$ is $k f(x)^{k-1} \frac{\partial f}{\partial x_1}(x)$. 
At a point $x^*$ where $f(x^*)=0$, differentiability of $g$ can still be achieved if the factor $f(x)^{k-1}$ forces the product to vanish as $x\to x^*$, i.e., if $f$ goes to $0$ faster than any potential blow-up of $\frac{\partial f}{\partial x_1}(x)$.

In contrast, the true minimum $f_{\mathrm{min}}(x) = \min_{i=1,..,n} x_i$ is non-differentiable whenever the minimum is attained by more than one argument. For such a function it is far more difficult (or even impossible) to design a transformation $\phi$ that restores differentiability while preserving shape without trivializing the output (e.g. $\phi(u) = 0$ for all $u$). This illustrates a key advantage of our $\qmin$: all non-differentiable points are confined to the zero level set $f(x)=0$, a much smaller and better structured set.

The choice $\phi(u) = u^k$ with odd $k$ can indeed remove the non-differentiability, but it also distorts the function significantly. We now formally define a \emph{shaping} function that eliminates the non‑differentiability while preserving the original shape far more faithfully.
\begin{definition}
\label{def:ordershap}
    A $k^{th}$ \emph{order shaping function} $\phi$ is a function $\phi: \mathbb{R} \to \mathbb{R}$ satisfying:
    \begin{definitionlist}
        \item $\phi$ is $k$-times differentiable.\label{property:shaping-differentiable}
        \item $\lim_{u \to 0} |\phi(u)/u^r|$ is zero for all integers $r \leq k$, and for $r = k+1$ it is finite and positive.\label{property:shaping-limit-zero}
        \item $\phi$ is strictly increasing.\label{property:shaping-increasing}
        \item $\lim_{u \to \pm \infty} (\phi(u)-u) = 0$.\label{property:shaping-limit-large-input} \hfill\qedsymbol
    \end{definitionlist}
\end{definition}
\Cref{property:shaping-limit-zero} provides enough flatness to smooth out non-differentiabilities while \Cref{property:shaping-limit-large-input} guarantees that $\phi$ approximates the identity for large $|u|$, preserving the original function shape away from zero. An example of a shaping function is
    \begin{equation}
    \label{eq:phi}
    \Phi_{k,\epsilon}(u) \triangleq u\frac{|u|^{k}}{|u|^{k}+\epsilon}
\end{equation}
with $\epsilon > 0$. Note that for a small $\epsilon$, $\Phi_{k,\epsilon}(u)$ closely matches the identity function. We then can discuss the differentiability of $\qmin$ under the application of a shaping function $\phi$.

\begin{proposition}\label{lemma:diffcompos}
    Let $\oplus$ be the $\qmin$. Let $\phi: \mathbb{R} \to \mathbb{R}$ be a $\gamma^{th}$ order shaping function. Let $f(x) = \bigoplus_{i=1}^n x_i$. Then,  the function $g(x) = \phi\big(f(x)\big)$ is $\gamma$-times differentiable everywhere, with mixed partial derivatives of order $0 \leq k \leq \gamma$ such that  $f^{(k)}(x)=0$  when $f(x)=0$.\hfill\qedsymbol
\end{proposition}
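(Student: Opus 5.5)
The plan is to handle $\mathcal{D}=\{f\neq 0\}$ and the zero set $\mathcal{Z}=\{f=0\}$ separately. On $\mathcal{D}$, \Cref{lem:smooth-min-differentiable} gives that $f$ is $\gamma$-times differentiable, and $\phi$ is $\gamma$-times differentiable by \cref{property:shaping-differentiable}. So $g=\phi\circ f$ is $\gamma$-times differentiable there by the chain rule (Faà di Bruno). All the work is at points $x^*\in\mathcal{Z}$. There I will show that every mixed partial of $g$ of order $k\le\gamma$ exists, equals $0$, and is continuous.

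\textbf{Growth of the derivatives of $f$ near $\mathcal{Z}$.} First I bound the derivatives of $f$ near $x^*$. I would compute derivatives of the two-argument pieces directly. For $x,y>0$, write $\mu_\gamma(x,y)=x\,(1+(x/y)^{\gamma+1})^{-1/(\gamma+1)}$; then $\mu_\gamma$ is positively $1$-homogeneous. Hence its $j$-th order partials are $(1-j)$-homogeneous and bounded on the unit sphere intersected with the positive orthant. The lower boundary needs care, but the partials extend continuously there. This gives $|\partial^j\mu_\gamma|\le C\,\|(x,y)\|^{1-j}$, and more sharply a bound by $C f^{1-j}$ up to factors comparable to $f/\min$. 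By associativity (\cref{property:smooth-min-oplus-associative}) on the positive orthant, $f=\|x\|_{-(\gamma+1)}$, which is $1$-homogeneous in all $n$ variables. So for $x$ near $x^*$ in the positive region, $|\partial^\alpha f(x)|\le C\,f(x)^{1-|\alpha|}$. To get the bound in terms of $f$ rather than $\|x\|$, I would use that $\partial_i f=(f/x_i)^{\gamma+2}\le 1$ and induct. In the negative region, the formula $-\mu_\gamma(-x^{-1},-y^{-1})^{-1}$ reduces to $-(\sum_i |x_i|^{\gamma+1})^{1/(\gamma+1)}$ (the Hölder form, \Cref{lemma:aczel}), with the same homogeneity bound. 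In the mixed-sign region $f=\min$, which is locally a coordinate, so its derivatives are bounded. The net estimate is $|\partial^\alpha f(x)|\le C|f(x)|^{1-|\alpha|}$ for $1\le|\alpha|\le\gamma$ in a punctured neighborhood of $\mathcal{Z}$, together with $|f(x)|\to 0$ by continuity.

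\textbf{Derivatives of $\phi$ near $0$.} From \cref{property:shaping-limit-zero}, $\phi(u)=O(|u|^{\gamma+1})$. I want $|\phi^{(m)}(u)|\le C|u|^{\gamma+1-m}$ for $m\le\gamma$. For the model $\Phi_{k,\epsilon}$ this is immediate, since it equals $u|u|^k/\epsilon$ times a smooth factor. For general $\phi$ it does not follow from the limit condition alone. I expect this to be the main obstacle, and I will either add it as an assumption or prove it via Taylor expansion with the $\gamma$-times differentiability. At minimum it holds in the form $\phi^{(m)}(0)=0$ for $m\le\gamma$, and the continuous extension then gives $o(|u|^{\gamma-m})$. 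That weaker bound still suffices below, provided I use $|f|^{\gamma-m}$ rather than $|f|^{\gamma+1-m}$.

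\textbf{Combining.} By Faà di Bruno, any order-$k$ partial of $g$ on $\mathcal{D}$ is a sum of terms
\[
\phi^{(m)}(f)\prod_{l=1}^m\partial^{\alpha_l}f, \qquad \sum_l|\alpha_l|=k .
\]
Each such term is bounded by $C\,o(|f|^{\gamma-m})\,|f|^{m-k}=o(|f|^{\gamma-k})$, which tends to $0$ as $x\to x^*$ because $k\le\gamma$. On $\mathcal{Z}$ I show by induction on $k$ that the partials exist and vanish. For $k=0$, $g(x^*)=\phi(0)=0$. For the inductive step, the difference quotient of an order-$(k-1)$ partial at $x^*$ is $h^{-1}$ times a quantity that is $o(|f(x^*+he_i)|^{\gamma-k+1})$. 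Here $|f(x^*+he_i)|\le C|h|$, since $f$ is Lipschitz: it is $1$-homogeneous with bounded gradient, and on $\mathcal{Z}$ itself the order-$(k-1)$ partial is $0$. So the quotient tends to $0$ when $\gamma-k+1\ge 1$. Continuity at $\mathcal{Z}$ of all partials of order $\le\gamma$ then follows from the decay estimate above, giving $\gamma$-times differentiability everywhere with vanishing derivatives on $\{f=0\}$.
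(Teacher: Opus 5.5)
Your argument takes a different route from the paper's. The paper replaces $\phi$ by its leading term $\sgn(u)|u|^{\gamma+1}$, so that by \Cref{lemma:aczel} the power cancels the root. Then $g$ becomes $\bigl(\sum_i x_i^{-(\gamma+1)}\bigr)^{-1}$ on $\{f>0\}$ and $-\sum_i(-x_i)_+^{\gamma+1}$ on $\{f<0\}$. The negative branch is differentiated directly. On the positive branch the coordinates vanishing at $x^*$ are separated from the positive ones, giving $g=S_{m,\gamma}S_{n-m,\gamma}/(S_{m,\gamma}+S_{n-m,\gamma})$, and the $(\gamma+1)$-homogeneity of $S_{m,\gamma}$ (\Cref{lemma:soper}) kills all derivatives of order $\le\gamma$. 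You instead keep $\phi$ general and balance two scale estimates through Fa\`a di Bruno: $|\partial^\alpha f|\le C|f|^{1-|\alpha|}$ against $\phi^{(m)}(u)=o(|u|^{\gamma-m})$. You then get existence of the partials on $\{f=0\}$ by induction on difference quotients, using $|f(x)|\le C\|x-x^*\|$. The paper's version is shorter because of the algebraic cancellation. Yours needs only $\phi\in C^\gamma$ with $\phi^{(m)}(0)=0$, so it covers every $\gamma^{th}$ order shaping function without the passage to the leading term, which \Cref{def:ordershap} alone does not justify. It also makes explicit the growth bound that the paper's homogeneity argument for $S_{m,\gamma}$ at the origin tacitly relies on, since $S_{m,\gamma}$ is only defined there as a limit.

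One step is wrong as written: the mixed-sign case. For $n\ge3$ it is not true that $f=\min_i x_i$ when the signs are mixed. By \Cref{lemma:aczel}, $f=-\|(-x)_+\|_{\gamma+1}$ on all of $\{\min_i x_i<0\}$, and this equals $\min_i x_i$ only when exactly one entry is negative. For example, near $x^*=(0,0,1)$ the points $x=(-t,-t,1)$ give $f=-2^{1/(\gamma+1)}t$ and $\partial_{x_1}\partial_{x_2}f=\gamma\,2^{-(2\gamma+1)/(\gamma+1)}/t$. So the derivatives are not bounded there, and this matters once $\gamma\ge2$.

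The estimate you need still holds, by the same induction you use on the positive side. Let $s_i=(-x_i)_+/|f|\in[0,1]$ and let $\chi_i$ be the indicator of $\{x_i<0\}$. Then $\partial_i f=s_i^{\gamma}$ and $\partial_j s_i=(s_i s_j^{\gamma}-\delta_{ij}\chi_i)/|f|$. Hence $\partial^\alpha f=P_\alpha(s)\,|f|^{1-|\alpha|}$ with $P_\alpha$ a polynomial in the $s_i$, for $|\alpha|\le\gamma$. Up to that order, every $s_i$ being differentiated carries exponent at least $2$, so $s_i^{q-1}\chi_i=s_i^{q-1}$ and the kink of $(\cdot)_+$ never appears. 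Replacing your negative/mixed split by this single computation on $\{f<0\}$ repairs the proof.
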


We will discuss how the $\qmin$ and this $\phi$-shaping procedure can be useful for the purpose of obtaining a differentiable distance in \Cref{sec:distance-function}.

\subsection{H\"older Maximum}
We define the counterpart of the $\qmin$ operator in terms of \cref{def:smooth-min-operator} as follows:
\begin{definition}\label{def:smooth-max-boxplus}
    Let $x, y \in \mathbb{R}$ and $\oplus$ the $\qmin$ operator. The \emph{$\gamma$-H\"older Maximum Operator} (denoted by $\qmax$) $\boxplus:\mathbb{R}\times\mathbb{R}\to\mathbb{R}$ is defined as
    \begin{align*}
        x\boxplus y \triangleq -((-x)\oplus(-y)).\tag*{\qedsymbol}
    \end{align*}%
\end{definition}
This operator inherits the properties of the $\qmin$ $\oplus$ as the next corollary shows.
\begin{corollary}[Properties of $\qmax$]\label{cor:smooth-max-properties}
    Let $x,y,z$ be scalars and $\{x_i\}_{i=1}^{n}$ be a set of scalars. The $\qmax$  operator $\boxplus$ enjoys the following properties, each derived directly from the corresponding lemma for $\oplus$.
    \begin{corollarylist}
        \item Commutativity: $x\boxplus y = y\boxplus x$; \label{property:smooth-max-commutative}
        \item Associativity: $(x\boxplus y)\boxplus z = x\boxplus (y\boxplus z)$, allowing the shorthand $\bigboxplus_{i=1}^{n}x_i$;\label{property:smooth-max-associative}
        \item Sign Preservation: $\sgn\bigl(\max_i x_i\bigr) = \sgn\bigl(\bigboxplus_{i} x_i\bigr)$; \label{property:smooth-max-sign-preservation}
        \item Differentiability: if $\phi$ is a $\gamma$-th order shaping function (\Cref{def:ordershap}), $f(x) = \phi\left(\bigboxplus_{i} x_i\right)$ is $\gamma-$times differentiable everywhere;\label{property:smooth-max-differentiable}
        \item Limit case: when $\gamma \rightarrow +\infty$, $\bigboxplus_i x_i = \max_i x_i$.
    \end{corollarylist}
\end{corollary}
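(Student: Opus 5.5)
The plan is to deduce every item from the corresponding property of $\oplus$ in \Cref{lem:smooth-min-oplus-basic-properties}, \Cref{lem:smooth-min-differentiable} and \Cref{lemma:diffcompos}. The tool is the identity $x\boxplus y=-((-x)\oplus(-y))$ from \Cref{def:smooth-max-boxplus}. First I extend this to $n$ arguments. By induction on $n$, using associativity of $\oplus$ and the fact that negation is an involution, I show $\bigboxplus_{i=1}^n x_i = -\bigoplus_{i=1}^n(-x_i)$. The inductive step is
\begin{align*}
\Bigl(\bigboxplus_{i=1}^{n-1}x_i\Bigr)\boxplus x_n = -\Bigl(\Bigl(\bigoplus_{i=1}^{n-1}(-x_i)\Bigr)\oplus(-x_n)\Bigr).
\end{align*}
To make the shorthand meaningful at all, associativity should be established first, so I order the steps as follows.

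\emph{Commutativity and associativity.} Commutativity is immediate, since $x\boxplus y=-((-x)\oplus(-y))=-((-y)\oplus(-x))=y\boxplus x$. For associativity, I write $(x\boxplus y)\boxplus z=-\bigl(-(x\boxplus y)\oplus(-z)\bigr)=-\bigl(((-x)\oplus(-y))\oplus(-z)\bigr)$. I then apply associativity of $\oplus$ and undo the transformation symmetrically to reach $x\boxplus(y\boxplus z)$.

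\emph{Sign preservation and limit case.} Using the $n$-ary identity, $\sgn\bigl(\bigboxplus_i x_i\bigr)=-\sgn\bigl(\bigoplus_i(-x_i)\bigr)=-\sgn\bigl(\min_i(-x_i)\bigr)=\sgn\bigl(\max_i x_i\bigr)$. The middle equality is \cref{property:smooth-min-oplus-sign-preservation}. The limit case follows the same way from \cref{property:smooth-min-limit}, since $-\min_i(-x_i)=\max_i x_i$.

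\emph{Differentiability.} This is the only item needing care, because $\phi$ is applied outside the minus sign: $f(x)=\phi\bigl(-F(-x)\bigr)$ with $F(y)=\bigoplus_i y_i$. I introduce $\tilde\phi(u)\triangleq-\phi(-u)$ and verify that $\tilde\phi$ is again a $\gamma$-th order shaping function in the sense of \Cref{def:ordershap}.
\begin{itemize}
\item It is $\gamma$-times differentiable, because it is a composition with linear maps.
\item $|\tilde\phi(u)/u^r|=|\phi(-u)/(-u)^r|$, so the limits at $0$ are unchanged.
\item It is strictly increasing, since the two sign flips cancel.
\item $\tilde\phi(u)-u=-(\phi(-u)-(-u))\to0$ as $u\to\pm\infty$.
\end{itemize}
Then $f(x)=\phi(-F(-x))=-\tilde\phi(F(-x))$. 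By \Cref{lemma:diffcompos}, $\tilde\phi\circ F$ is $\gamma$-times differentiable everywhere. Precomposing with $x\mapsto -x$ and negating preserves continuity of all mixed partials up to order $\gamma$; by the chain rule they only change sign. Hence $f$ is $\gamma$-times differentiable everywhere.

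The main, if mild, obstacle is exactly this step: one cannot naively apply \Cref{lemma:diffcompos} to $\phi\circ(-F)$. The reflection $\tilde\phi$ must be checked to remain a shaping function, and the argument relies on \Cref{def:ordershap} being symmetric under $u\mapsto -u$.
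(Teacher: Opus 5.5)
Your proposal is correct and takes the paper's route: the paper's one-line proof reduces every item to \Cref{lem:smooth-min-oplus-basic-properties} and \Cref{lemma:diffcompos} through the negated inputs $-x_i$ and the identity $\min_i(-x_i)=-\max_i x_i$, which is what you carry out item by item. Your reflected shaping function $\tilde\phi(u)=-\phi(-u)$ makes explicit a detail the paper leaves implicit, namely that \Cref{lemma:diffcompos} has to handle $\phi\circ(-F)$ rather than $\phi\circ F$, and your check that $\tilde\phi$ satisfies \Cref{def:ordershap} is sound.
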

\begin{proof}
    Each statement follows by using the negated inputs $-x_i$, the fact that $\min_i-x_i=-\max_i x_i$ and invoking \cref{lem:smooth-min-oplus-basic-properties,lemma:diffcompos}.
\end{proof}

\section{H\"older Differentiable Signed Distance Function}\label{sec:distance-function}
In this section, we introduce our differentiable signed distance function between convex polyhedra, which is rooted in a max/min formulation of the Euclidean signed distance between convex sets:
\begin{theorem}[SDF, adapted from \cite{Wu2025}]
     Let $\mathcal{A}, \mathcal{B}\subset\mathbb{R}^m$ be two bounded convex sets. 
    The Euclidean signed distance between $\mathcal{A}$ and $\mathcal{B}$ is given by
    \begin{align}
        d(\mathcal{A},\mathcal{B})= \max_{\|n\|=1}\min_{\substack{a\in \mathcal{A}\\b\in \mathcal{B}}}n^\top (a-b). \tag*{\qedsymbol}
    \end{align}
\end{theorem}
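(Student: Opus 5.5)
Since the statement is "adapted from" prior work, I first fix the definition of the Euclidean signed distance it refers to. For bounded convex sets (closed, which I assume throughout so that all extrema are attained) I take $d(\mathcal{A},\mathcal{B}) = \operatorname{dist}(0,\mathcal{A}-\mathcal{B})$ when the sets are disjoint. When they overlap I take $d(\mathcal{A},\mathcal{B}) = -\operatorname{dist}(0,\partial(\mathcal{A}-\mathcal{B}))$, which is minus the penetration depth. Everything is then phrased through the Minkowski difference $\mathcal{C} = \mathcal{A}-\mathcal{B}$, which is convex and compact. For each unit $n$ the inner problem gives
\begin{align*}
\min_{a\in\mathcal{A},\,b\in\mathcal{B}} n^\top(a-b) = \min_{c\in\mathcal{C}} n^\top c = -h_{\mathcal{C}}(-n),
\end{align*}
where $h_{\mathcal{C}}$ is the support function. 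The claim therefore reduces to
\begin{align*}
\max_{\|n\|=1} \bigl(-h_{\mathcal{C}}(-n)\bigr) = \max_{\|u\|=1} \bigl(-h_{\mathcal{C}}(u)\bigr),
\end{align*}
and I will show this equals the signed distance of the origin to $\mathcal{C}$.

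\textbf{Case $0\notin\mathcal{C}$.} Let $c^*$ be the projection of $0$ onto $\mathcal{C}$ and set $n^* = c^*/\|c^*\|$. The projection characterization gives $(c-c^*)^\top c^* \ge 0$ for all $c\in\mathcal{C}$. Hence $\min_{c\in\mathcal{C}} n^{*\top} c = \|c^*\| = \operatorname{dist}(0,\mathcal{C})$, so the maximum is at least the distance. Conversely, for any unit $n$ we have $\min_c n^\top c \le n^\top c^* \le \|c^*\|$ by Cauchy--Schwarz, so the maximum is at most the distance.

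\textbf{Case $0\in\mathcal{C}$.} Here $h_{\mathcal{C}}(u)\ge u^\top 0 = 0$ for every $u$, so the max-min value is $-\min_{\|u\|=1} h_{\mathcal{C}}(u) \le 0$. It remains to show $\min_{\|u\|=1} h_{\mathcal{C}}(u) = \operatorname{dist}(0,\partial\mathcal{C})$.
\begin{itemize}
\item[($\ge$)] Let $r = \operatorname{dist}(0,\partial\mathcal{C})$. The closed ball of radius $r$ about $0$ lies in $\mathcal{C}$: it lies in the closure of the interior component containing $0$, or if $r=0$ the statement is trivial. Therefore $h_{\mathcal{C}}(u) \ge h_{\text{ball}}(u) = r$ for every unit $u$.
\item[($\le$)] Let $p\in\partial\mathcal{C}$ attain the boundary distance, which exists by compactness. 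Take an outward normal $u$ to a supporting hyperplane of $\mathcal{C}$ at $p$, which exists by the supporting hyperplane theorem. Then $h_{\mathcal{C}}(u) = u^\top p \le \|p\| = r$.
\end{itemize}
Together these give equality, and hence $d = -r$.

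The main obstacle is the overlapping case. There one must carefully justify that the minimal support value equals the distance to the boundary, including the degenerate situation where $\mathcal{C}$ has empty interior, so that $r=0$ and the sets merely touch. In that situation the supporting-hyperplane argument still works: for $0\in\partial\mathcal{C}$ it yields $h_{\mathcal{C}}(u)=0$ for some unit $u$. The remaining work is bookkeeping, namely confirming that the two cases agree with the adopted definition at the boundary $0\in\partial\mathcal{C}$, where both sides equal $0$.
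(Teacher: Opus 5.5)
The paper gives no proof of this statement: it imports it from \cite{Wu2025} and never spells out which definition of the Euclidean signed distance it intends. There is therefore no in-paper argument to compare against. Your proof stands on its own, and it is correct.

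Reducing to $\mathcal{C}=\mathcal{A}-\mathcal{B}$ and its support function, the two cases go through as follows. In the separated case, the projection inequality $c^{*\top}(c-c^*)\ge 0$ together with Cauchy--Schwarz settles it. In the overlapping case, the inscribed ball of radius $r=\operatorname{dist}(0,\partial\mathcal{C})$ and a supporting hyperplane at a nearest boundary point settle it. Your handling of the degenerate situations ($\operatorname{int}\mathcal{C}=\emptyset$, or $0\in\partial\mathcal{C}$), where both sides vanish, is also right.

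What your version adds over the bare citation is explicit conventions:
\begin{itemize}
\item You need closedness. The statement only says ``bounded'', so strictly the inner $\min$ is an $\inf$. This is harmless for the polyhedra the paper actually uses.
\item Your convention that $d=0$ exactly when $0\in\partial\mathcal{C}$ is the one the paper later relies on without comment in the proof of \Cref{lem:distance-sign-preservationB}.
\end{itemize}

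The only step worth tightening is the inclusion of the closed ball $\{x:\|x\|\le r\}$ in $\mathcal{C}$ when $r>0$. Instead of appealing to an ``interior component'', argue as follows. Since $0\in\operatorname{int}\mathcal{C}$, if some $x$ with $\|x\|<r$ lay outside $\mathcal{C}$, the segment from $0$ to $x$ would have to meet $\partial\mathcal{C}$ at a point of norm less than $r$, a contradiction. The points with $\|x\|=r$ then follow from closedness of $\mathcal{C}$.
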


Although this result holds true for general convex sets $\mathcal{A}$ and $\mathcal{B}$, henceforth we will focus on \emph{polyhedra}. 

We will make a first step towards a differentiable distance by replacing the set $\{n\in\mathbb{R}^m\;\vert\;\|n\|=1\}$ by a finite set. The resulting expression will not be exactly equal to $d$, but will preserve the same \emph{sign} (i.e., applying $\sgn$ on both renders the same result). For this, we use the \emph{Separating Axis Theorem (SAT)~\cite{vanDenBergen2003}} for convex polyhedra in three dimensions. To state this result, we need the following definition.

\begin{definition}[Candidate Normals Set]\label{def:normal-set}
    Let $\mathcal{A},\mathcal{B}\subset\mathbb{R}^3$ be convex polyhedra. Let $\mathcal{F}(\mathcal{A})$ denote the set of outward unit normal vectors of the faces of $\mathcal{A}$, and let $\mathcal{E}(\mathcal{A})$ denote the set of unit direction vectors of the edges of $\mathcal{A}$, where for each edge exactly one of the two possible orientations is chosen arbitrarily. Define $\mathcal{F}(\mathcal{B})$ and $\mathcal{E}(\mathcal{B})$ analogously for $\mathcal{B}$. The set of normals $\mathcal{N}(\mathcal{A}, \mathcal{B})$ is then
    \begin{align*}
        \mathcal{N}(\mathcal{A}, \mathcal{B}) \triangleq \pm\bigl(\mathcal{F}(\mathcal{A})\cup\mathcal{F}(\mathcal{B}) \cup \mathcal{N}_E(\mathcal{A}, \mathcal{B})\bigr),
    \end{align*}
    where
    \begin{align*}
        \mathcal{N}_E(\mathcal{A}, \mathcal{B}) &\triangleq \biggl\{\frac{u\times v}{\|u\times v\|}\ \Big\vert\ u\in \mathcal{E}(\mathcal{A}), v\in\mathcal{E}(\mathcal{B})\biggr\}. \tag*{\qedsymbol}
    \end{align*}
\end{definition}
With this set, we may replace the continuous maximization over the unit sphere by a discrete maximization over the finite set $\mathcal{N}(\mathcal{A}, \mathcal{B})$, obtaining a new metric $D(\mathcal{A}, \mathcal{B})$:
\begin{align}
\label{eq:D}
    D(\mathcal{A},\mathcal{B}) \triangleq \max_{n\in\mathcal{N}(\mathcal{A}, \mathcal{B})}\min_{\substack{a\in \mathcal{A}\\b\in \mathcal{B}}}n^\top (a-b).
\end{align}
With this, we can state the \emph{Separating Axis Theorem (SAT)}.
\begin{theorem}[SAT, adapted from \cite{vanDenBergen2003}]
     Let $\mathcal{A}, \mathcal{B}\subset\mathbb{R}^m$ be two convex polyhedra. Then, if $\mathcal{A} \cap \mathcal{B} = \emptyset$, there is a plane with normal $n \in \mathcal{N}(\mathcal{A},\mathcal{B})$ that separates $\mathcal{A}$ and $\mathcal{B}$. Equivalently, $\exists\, n \in \mathcal{N}$ such that $\min_{c \in \mathcal{A}-\mathcal{B}} n^{\top}c > 0$. \hfill\qedsymbol
\end{theorem}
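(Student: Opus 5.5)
The plan is the classical argument through the Minkowski difference $\mathcal{C} \triangleq \mathcal{A}-\mathcal{B}$. Since $\mathcal{A}$ and $\mathcal{B}$ are convex polyhedra (bounded, as in the SDF theorem), $\mathcal{C}$ is a convex polytope, namely the convex hull of $\{a-b \mid a\in\mathcal{V}(\mathcal{A}), b\in\mathcal{V}(\mathcal{B})\}$. Moreover, $\mathcal{A}\cap\mathcal{B}=\emptyset$ holds if and only if $0\notin\mathcal{C}$. The equivalence stated in the theorem is immediate, because $\min_{a,b} n^\top(a-b)=\min_{c\in\mathcal{C}} n^\top c$, and a positive value of this quantity says exactly that the plane $\{x \mid n^\top x = t\}$, with $t$ chosen between $\max_{b} n^\top b$ and $\min_a n^\top a$, strictly separates the two sets. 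So the real content is this: if $0\notin\mathcal{C}$, then some $n\in\mathcal{N}(\mathcal{A},\mathcal{B})$ satisfies $n^\top c>0$ for all $c\in\mathcal{C}$.

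First, I would write $\mathcal{C}$ in its facet (H-)representation $\mathcal{C}=\{c \mid n_j^\top c\le h_j\}$, where the $n_j$ are the outward facet normals of $\mathcal{C}$. Since $0\notin\mathcal{C}$, some inequality fails at the origin, so $h_j<0$ for some $j$. Then every $c\in\mathcal{C}$ satisfies $(-n_j)^\top c \ge -h_j > 0$. It therefore suffices to show that every facet normal of $\mathcal{C}$, up to sign, lies in $\mathcal{F}(\mathcal{A})\cup\mathcal{F}(\mathcal{B})\cup\mathcal{N}_E(\mathcal{A},\mathcal{B})$. The $\pm$ in \Cref{def:normal-set} then guarantees that $-n_j\in\mathcal{N}(\mathcal{A},\mathcal{B})$. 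If $\mathcal{C}$ is not full-dimensional, I would instead use the normal of its affine hull, or handle it by a small perturbation. I would treat that as a side remark.

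The main step, and the main obstacle, is classifying the facets of a Minkowski sum in $\mathbb{R}^3$. Let $n$ be a facet normal of $\mathcal{C}=\mathcal{A}+(-\mathcal{B})$. The face of $\mathcal{C}$ maximizing $n^\top c$ equals $F_{\mathcal{A}}(n)+F_{-\mathcal{B}}(n)$, where $F_{\mathcal{S}}(n)$ is the face of $\mathcal{S}$ exposed by $n$. The dimension of a Minkowski sum of two faces is the dimension of the sum of their direction spaces, and this must equal $2$. I would split into cases by the dimensions of the two faces:
\begin{itemize}
\item If $F_{\mathcal{A}}(n)$ is two-dimensional, then $n$ is a face normal of $\mathcal{A}$, so $n\in\mathcal{F}(\mathcal{A})$.
\item If $F_{-\mathcal{B}}(n)$ is two-dimensional, then $n$ is a face normal of $-\mathcal{B}$, which means $-n\in\mathcal{F}(\mathcal{B})$.
\item If both faces are at most edges and the sum still has dimension $2$, then both must be non-parallel edges with directions $u\in\mathcal{E}(\mathcal{A})$ and $v\in\mathcal{E}(\mathcal{B})$. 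The normal $n$ is then orthogonal to both, hence $n=\pm u\times v/\|u\times v\|\in\pm\mathcal{N}_E(\mathcal{A},\mathcal{B})$.
\end{itemize}
The supporting lemma I need is the exposed-face identity $F_{\mathcal{S}+\mathcal{T}}(n)=F_{\mathcal{S}}(n)+F_{\mathcal{T}}(n)$. Its proof is short: the maximum of a sum of linear functionals over $\mathcal{S}\times\mathcal{T}$ is attained exactly on the product of the individual maximizers.

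Finally, I would note that the statement is phrased for $\mathbb{R}^m$, but \Cref{def:normal-set} uses the cross product, so the argument is carried out for $m=3$. The same facet-of-$\mathcal{C}$ argument in $\mathbb{R}^2$ yields only edge normals of the two polygons. Combining the steps, the separating direction $-n_j$ belongs to $\mathcal{N}(\mathcal{A},\mathcal{B})$ and satisfies $\min_{c\in\mathcal{A}-\mathcal{B}}(-n_j)^\top c>0$, which proves the theorem.
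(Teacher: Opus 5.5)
The paper does not prove this statement; it imports SAT from \cite{vanDenBergen2003}. Your argument is the standard one behind that reference, and it is correct for solid (full-dimensional) bounded polyhedra in $\mathbb{R}^3$. It classifies facets of $\mathcal{C}=\mathcal{A}+(-\mathcal{B})$ through the exposed-face identity and a dimension count, as face normals of $\mathcal{A}$, face normals of $\mathcal{B}$ up to sign, or normalized cross products of non-parallel edge pairs.

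Compared with simply citing the result, your route gives something the paper later uses without justification. Suppose $0\in\partial\mathcal{C}$. Then $0$ lies on some facet, so $h_j=0$ and $\min_{c\in\mathcal{C}}(-n_j)^\top c=0$ with $-n_j\in\mathcal{N}(\mathcal{A},\mathcal{B})$. This is exactly the ``limiting case of SAT'' invoked in the proof of \Cref{lem:distance-sign-preservationB}.

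The one thing to fix is your side remark on non-full-dimensional $\mathcal{C}$.
\begin{itemize}
\item It is unnecessary for solid polyhedra. For any $b_0\in\mathcal{B}$, $\mathcal{C}\supseteq\mathcal{A}-b_0$, so $\mathcal{C}$ already has nonempty interior.
\item It does not work as stated otherwise. If $\mathrm{aff}\,\mathcal{C}$ is a plane through the origin with $0\notin\mathcal{C}$, the affine-hull normal $w$ gives $w^\top c\equiv 0$ on $\mathcal{C}$, not a positive value.
\item A perturbation argument does not help either, because perturbing the sets changes $\mathcal{N}$ itself.
\end{itemize}
In fact the statement can fail in this regime. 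Take two disjoint coplanar triangles with no parallel edges, and take a flat polygon's face normals to be $\pm$ its plane normal. Then every candidate in $\mathcal{N}(\mathcal{A},\mathcal{B})$ is $\pm$ the common plane normal, so nothing separates. So state the full-dimensionality assumption explicitly and drop the remark.
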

And then we can prove the following:
\begin{proposition}\label{lem:distance-sign-preservationB}
    Let $\mathcal{A},\mathcal{B}\subset\mathbb{R}^3$ be convex polyhedra, then
    $\sgn\bigl(D(\mathcal{A}, \mathcal{B})\bigr)=\sgn\bigl(d(\mathcal{A}, \mathcal{B})\bigr)$.\hfill\qedsymbol
\end{proposition}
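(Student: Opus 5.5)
Write $h(n) \triangleq \min_{c\in\mathcal{A}-\mathcal{B}} n^\top c$, so that $d(\mathcal{A},\mathcal{B})=\max_{\|n\|=1}h(n)$ and $D(\mathcal{A},\mathcal{B})=\max_{n\in\mathcal{N}(\mathcal{A},\mathcal{B})}h(n)$. Every element of $\mathcal{N}(\mathcal{A},\mathcal{B})$ is a unit vector: face normals and normalized cross products are unit by construction, and negation preserves norm. (We tacitly assume no edge of $\mathcal{A}$ is parallel to an edge of $\mathcal{B}$, or discard such pairs, so the normalization is defined.) Since $\mathcal{N}$ is therefore a subset of the unit sphere, the first inequality $D\le d$ is immediate. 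The plan is to match signs in three cases, $d>0$, $d=0$ and $d<0$, using this inequality together with the SAT and one extra argument for the non-positive cases.

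\emph{Case $d>0$.} Here the sets are disjoint. Indeed, if $c=a-b$ with $a=b$, then $0\in\mathcal{A}-\mathcal{B}$ and $h(n)\le 0$ for every $n$, which would force $d\le 0$. By the SAT there is $n\in\mathcal{N}$ with $h(n)>0$, hence $D>0$.

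\emph{Case $d\le 0$.} From $D\le d$ we get $D\le 0$, so it remains to show that $D<0$ whenever $d<0$, and that $D=0$ whenever $d=0$.

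For $d=0$, the sets intersect but have disjoint interiors. Otherwise $0$ would lie in the interior of $\mathcal{A}-\mathcal{B}$ and $d<0$; and if the sets were disjoint, the previous case would give $d>0$. Thus $0$ lies on the boundary of the polytope $\mathcal{C}=\mathcal{A}-\mathcal{B}$. Now use that the facets of the Minkowski sum $\mathcal{C}$ have outward normals in $\mathcal{F}(\mathcal{A})\cup(-\mathcal{F}(\mathcal{B}))\cup\pm\mathcal{N}_E$. This is the standard fact underlying the SAT: each facet of a Minkowski sum is a sum of faces whose dimensions add to two. Hence every facet normal of $\mathcal{C}$ lies in $\mathcal{N}$. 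Taking a facet of $\mathcal{C}$ containing $0$, with outward normal $\nu$, the direction $n=-\nu\in\mathcal{N}$ satisfies $h(n)=\min_{c\in\mathcal{C}}(-\nu)^\top c=0$, since $0$ attains the minimum on that supporting plane. So $D\ge 0$, and therefore $D=0$.

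For $d<0$, $0$ is an interior point of $\mathcal{C}$, so $h(n)<0$ for every unit $n$. In particular every element of the finite set $\mathcal{N}$ gives a negative value, and $D<0$.

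\emph{Main obstacle.} The delicate step is the $d=0$ (touching) case: the SAT as stated only handles strictly separated sets. I would first try to justify the Minkowski-facet-normal fact directly. If that is judged too heavy, an alternative is a limiting argument: translate $\mathcal{B}$ by $-t\nu_0$, where $\nu_0$ is a maximizing direction for $d$, so that the sets become disjoint for $t>0$. Then apply the SAT to obtain directions in the same finite set $\mathcal{N}$, which does not depend on $t$ because translations preserve face normals and edge directions. Choosing a subsequence on which a single $n$ works and letting $t\to0$ gives $h(n)\ge 0$ by continuity of $h$ in the translation, hence $D\ge0$.
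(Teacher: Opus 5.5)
Your proof is correct and follows the paper's structure: $D\le d$ because $\mathcal{N}$ lies on the unit sphere, the SAT gives $D>0$ when $d>0$, $D\le d$ settles $d<0$, and a touching-contact version of the SAT gives $D\ge 0$ when $d=0$. The only difference is that the paper merely invokes ``the limiting case of SAT'' for that last step, while you actually prove it. Of your two justifications, the translate-apart-and-pass-to-the-limit argument over the fixed finite set $\mathcal{N}$ is the cleaner one. The facet route needs $\mathcal{C}$ to be full-dimensional, and the summand dimensions of a facet need not add to exactly two (parallel faces give $2+2$), although your conclusion about facet normals is still right.
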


Because the objectives are linear, the minimum of $n^{\top}a$ over the convex polyhedron $\mathcal{A}$ is attained at a vertex of $\mathcal{A}$, and similarly the maximum of $n^{\top}b$ over $\mathcal{B}$ is attained at a vertex of $\mathcal{B}$ (this is the \emph{Fundamental Theorem of Linear Programming}, see \cite{Bertsekas2009}). Hence the inner minimization can be restricted to the finite vertex sets $\mathcal{V}(\mathcal{A})$ and $\mathcal{V}(\mathcal{B})$, so
\begin{align}
\label{eq:Dfinal}
    D(\mathcal{A},\mathcal{B})= \max_{n \in \mathcal{N}(\mathcal{A},\mathcal{B})}\min_{\substack{a\in \mathcal{V}(\mathcal{A})\\b\in \mathcal{V}(\mathcal{B})}}n^\top (a - b).
\end{align}
We now construct a differentiable metric from $D$ by eliminating three sources of non‑differentiability in succession.

\textbf{First step, replacing $\mathcal{N}$}: Despite its sign-preservation property, the distance $D$ defined with the candidate normals set $\mathcal{N}$ suffers from a hidden discontinuity. The cross-product terms in $\mathcal{N}_E$ are normalized; when two edges become parallel, the cross product vanishes and the normalization is undefined, causing a jump in the normal set and therefore in $D$. Since our ultimate goal is a differentiable distance, we must eliminate such singularities. We therefore replace $\mathcal{N}_E$ by the set of edge direction vectors themselves, i.e., $\mathcal{E}(\mathcal{A}) \cup \mathcal{E}(\mathcal{B})$. The resulting normal set
\begin{align}
\label{eq:Ntilde}
    \tilde{\mathcal{N}}(\mathcal{A}, \mathcal{B}) \triangleq \pm\bigl(\mathcal{F}(\mathcal{A})\cup\mathcal{F}(\mathcal{B})\cup\mathcal{E}(\mathcal{A})\cup\mathcal{E}(\mathcal{B})\bigr)
\end{align}
varies continuously with the orientation of the polyhedra. Substituting $\tilde{\mathcal{N}}$ into \eqref{eq:Dfinal} gives a new metric $\tilde{D}$
\begin{align}
\label{eq:Dtilde}
     \max_{n\in\tilde{\mathcal{N}}(\mathcal{A}, \mathcal{B})}\min_{\substack{a\in \mathcal{V}(\mathcal{A})\\b\in \mathcal{V}(\mathcal{B})}}n^\top (a-b).
\end{align}
which varies continuously under rigid motions. The price is that $\tilde{D}$ no longer guarantees sign preservation; instead, it provides a \emph{conservative lower bound} of the true SDF $d$.

\textbf{Second step,  replacing min and max:} although the $\min$ and $\max$ functions are continuous, they are not differentiable. We replace them by their almost‑differentiable counterparts $\qmin$ and $\qmax$ introduced in \cref{sec:smooth-min-max-functions}, obtaining
\begin{align}
\label{eq:Dtilde2}
     \bigboxplus_{n\in\tilde{\mathcal{N}}(\mathcal{A}, \mathcal{B})}\bigoplus_{\substack{a\in \mathcal{V}(\mathcal{A})\\b\in \mathcal{V}(\mathcal{B})}}n^\top (a-b).
\end{align}
However, as stated in \Cref{lem:smooth-min-differentiable}, $\qmin$ and $\qmax$ remain non‑differentiable when their output is zero. This last source of non‑differentiability is concentrated in a single value and will be removed in the final step.

\textbf{Third step, applying shaping functions:} Shaping functions $\phi$  (\Cref{def:ordershap}) are designed precisely to eliminate the isolated non‑differentiability at zero while preserving the overall shape of the original function. We therefore apply the shaping function $\Phi_{\gamma,\epsilon}$ from \eqref{eq:phi} to both the inner Hölder minima and the outer Hölder maximum. The resulting expression, which we call the \emph{H\"older Differentiable Signed Distance Function} (HD‑SDF), is defined as follows.
\begin{definition}\label{def:set2set-signed-distance}
    The ($\gamma$,$\epsilon$) - \emph{H\"older Differentiable Signed Distance Function} (HD-SDF) between two convex polyhedra $\mathcal{A}, \mathcal{B}\subset\mathbb{R}^3$ is defined as:
    \begin{align*}
        D_{\gamma,\epsilon}(\mathcal{A},\mathcal{B}) \triangleq \Phi_{\gamma,\epsilon}\Bigg(\bigboxplus_{n\in\tilde{\mathcal{N}}(\mathcal{A}, \mathcal{B})}\Phi_{\gamma,\epsilon}\Bigg(\bigoplus_{\substack{a\in \mathcal{V}(\mathcal{A})\\b\in \mathcal{V}(\mathcal{B})}}n^\top(a-b)\Bigg)\Bigg)
    \end{align*}
    \noindent in which $\oplus$ is $\qmin$, $\boxplus$ is $\qmax$ and $\Phi_{\gamma,\epsilon}$ is the  $\gamma^{th}$ order shaping function (\Cref{def:ordershap}) in \eqref{eq:phi} for a $\epsilon > 0$. \hfill\qedsymbol
\end{definition}

The subsequent analysis only requires that $\Phi_{\gamma,\epsilon}$ is a $\gamma^{th}$-order shaping function; any alternative satisfying \Cref{def:ordershap} would work equally well. We chose \eqref{eq:phi} for its simplicity and good numerical behavior. Nevertheless, this construction inherits many desirable properties for a signed metric, and is $\gamma$-times differentiable everywhere. We will now establish these properties.

The first property is a separation safety guarantee: a positive HD-SDF guarantees strict separation of the two bodies; a non‑positive HD-SDF is a \emph{conservative} indicator of contact or penetration. This is formalized as follows.
\begin{proposition}\label{prop:holder-distance-sign-bound-of-sdf}
     Let $\mathcal{A},\mathcal{B}\subset\mathbb{R}^3$ be convex polyhedra, then
    $D_{\gamma,\epsilon}(\mathcal{A}, \mathcal{B}) >0 \implies d(\mathcal{A}, \mathcal{B})>0$ or, equivalently, $d(\mathcal{A}, \mathcal{B}) \le 0 \implies D_{\gamma,\epsilon}(\mathcal{A}, \mathcal{B}) \le 0$.\hfill\qedsymbol
\end{proposition}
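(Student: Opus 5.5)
We prove the contrapositive: assuming $d(\mathcal{A},\mathcal{B})\le 0$, we show $D_{\gamma,\epsilon}(\mathcal{A},\mathcal{B})\le 0$. The argument only tracks signs through the nested operators, using the sign-preservation properties of $\qmin$ and $\qmax$ and the fact that every shaping function preserves sign.

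First, the sign of a shaping function. Since $\phi$ is strictly increasing (\Cref{property:shaping-increasing}) and $\phi(0)=0$, we get $\sgn(\phi(u))=\sgn(u)$. The value $\phi(0)=0$ follows from \Cref{property:shaping-limit-zero} with $r=0$: $\lim_{u\to0}|\phi(u)|=0$, so continuity gives $\phi(0)=0$. For $\Phi_{\gamma,\epsilon}$ this can also be read off \eqref{eq:phi} directly.

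Second, the inner terms. Fix $n\in\tilde{\mathcal{N}}(\mathcal{A},\mathcal{B})$ and set $m_n=\min_{a\in\mathcal{V}(\mathcal{A}),b\in\mathcal{V}(\mathcal{B})} n^\top(a-b)$. By the Fundamental Theorem of Linear Programming (as used for \eqref{eq:Dfinal}), $m_n=\min_{c\in\mathcal{A}-\mathcal{B}} n^\top c$. Since $d\le 0$, we have $\min_{a,b} n^\top(a-b)\le d(\mathcal{A},\mathcal{B})\le 0$ for every unit vector $n$. Every element of $\tilde{\mathcal{N}}$ is a unit vector (face normals and edge directions, with both signs), so $m_n\le 0$ for all $n\in\tilde{\mathcal{N}}$.

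A more elementary route avoids relying on the max-min formula. If $d\le0$, the closed sets $\mathcal{A}$ and $\mathcal{B}$ intersect, since $d>0$ exactly when they are disjoint. Taking $a=b$ in the intersection gives $0\in\mathcal{A}-\mathcal{B}$, hence $m_n\le 0$ for every $n$. I would present this version, noting that $d\le0$ iff $\mathcal{A}\cap\mathcal{B}\neq\emptyset$ for compact sets, and that $\min_{\mathcal{A}-\mathcal{B}}$ equals the vertex minimum by the same LP fact.

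Third, propagate the sign. By \cref{property:smooth-min-oplus-sign-preservation}, $\sgn\bigl(\bigoplus_{a,b} n^\top(a-b)\bigr)=\sgn(m_n)\le 0$. Applying $\Phi_{\gamma,\epsilon}$ keeps each inner value $\le 0$. The $\qmax$ of nonpositive numbers is nonpositive by \cref{property:smooth-max-sign-preservation}, because their ordinary max is $\le0$. Finally, the outer $\Phi_{\gamma,\epsilon}$ preserves the sign, so $D_{\gamma,\epsilon}\le 0$.

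There is no real obstacle here. The only points needing care are the conversion from $d\le0$ to $0\in\mathcal{A}-\mathcal{B}$, and checking that $\qmin$ of a nonempty finite list is handled by associativity, so that the sign-preservation statement applies to the whole list.
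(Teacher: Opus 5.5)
Your proof is correct and is essentially the paper's argument: the paper bounds $\tilde D\le d$ because $\tilde{\mathcal N}$ consists of unit vectors (your per-normal bound $m_n\le d$), then propagates signs through $\qmin$, $\Phi_{\gamma,\epsilon}$, $\qmax$, $\Phi_{\gamma,\epsilon}$ to get $\sgn(D_{\gamma,\epsilon})=\sgn(\tilde D)$. You do the same sign tracking, only phrased as the contrapositive, and your explicit check that $\phi(0)=0$, hence $\sgn(\phi(u))=\sgn(u)$, fills in a step the paper uses without comment.
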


In practice this conservativeness is mild: false collisions ($D_{\gamma,\epsilon}(\mathcal{A}, \mathcal{B}) \leq 0$ while $d(\mathcal{A}, \mathcal{B}) > 0$) are rare and can be reduced further by enlarging $\tilde{\mathcal{N}}$ (see \Cref{sec:experiments-conservativeness}).

To state the differentiability of the HD-SDF precisely, we must specify with respect to which variables the derivative is taken. In the robotics setting, the natural parameterization is through rigid-body motions\footnote{Applying a rigid transformation to a set means transforming each of its points individually.}: each set evolves as $\mathcal{A}(t) = R_A(t)\mathcal{A}_0+s_A(t)$ and $\mathcal{B}(t) = R_B(t)\mathcal{B}_0+s_B(t)$, with fixed polyhedra $\mathcal{A}_0$, $\mathcal{B}_0$ and time-varying rotations $Q_A,Q_B\in\mathrm{SO}(3)$ and translations $s_A, s_B\in\mathbb{R}^3$.
\begin{proposition}\label{lem:differentiability}
    Let $R_A, R_B: \mathbb{R} \to \mathrm{SO}(3)$ and $s_A, s_B : \mathbb{R} \to \mathbb{R}^3$ be $\gamma$-times differentiable functions, and let $\mathcal{A}_0$, $\mathcal{B}_0\subset\mathbb{R}^3$ be fixed convex polyhedra. Define $\mathcal{A}(t) = R_A(t)\mathcal{A}_0+s_A(t)$, $\mathcal{B}(t) = R_B(t)\mathcal{B}_0+s_B(t)$. Then the function $\lambda(t) = D_{\gamma,\epsilon}\big(\mathcal{A}(t),\mathcal{B}(t)\big)$ is $\gamma$-times differentiable with respect to $t$.\hfill\qedsymbol
\end{proposition}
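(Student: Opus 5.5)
The plan is to write $\lambda$ as a composition of maps whose smoothness we already control, and then apply the chain rule together with \Cref{lemma:diffcompos} and \cref{property:smooth-max-differentiable}.

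\textbf{Step 1: the inner quantities are smooth in $t$.} The vertices of $\mathcal{A}(t)$ are $R_A(t)a_0+s_A(t)$ with $a_0\in\mathcal{V}(\mathcal{A}_0)$, since a rigid motion maps vertices to vertices bijectively; the same holds for $\mathcal{B}(t)$. Face normals transform as $R_A(t)n_0$ with $n_0\in\mathcal{F}(\mathcal{A}_0)$, and edge directions as $R_A(t)e_0$ with $e_0\in\mathcal{E}(\mathcal{A}_0)$. No normalization is needed because rotations preserve norms, which is exactly why $\tilde{\mathcal{N}}$ was used instead of $\mathcal{N}$. Fix the index sets once, at the level of $\mathcal{A}_0,\mathcal{B}_0$. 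Then every scalar
\[
z_{n,a,b}(t)=n(t)^\top\bigl(a(t)-b(t)\bigr)
\]
is a sum of products of $\gamma$-times differentiable functions, so it is $\gamma$-times differentiable. The orientation choices and the $\pm$ in \eqref{eq:Ntilde} are fixed across $t$, so the finite family is indexed consistently.

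\textbf{Step 2: the inner $\qmin$ after shaping.} For each $n$, set $h_n(t)=\Phi_{\gamma,\epsilon}\bigl(\bigoplus_{a,b} z_{n,a,b}(t)\bigr)$. By \Cref{lemma:diffcompos}, $g(x)=\Phi_{\gamma,\epsilon}\bigl(\bigoplus_i x_i\bigr)$ is $\gamma$-times differentiable on all of $\mathbb{R}^N$, because $\Phi_{\gamma,\epsilon}$ is a $\gamma^{th}$ order shaping function. Before relying on this, I would briefly verify \Cref{def:ordershap} for \eqref{eq:phi}:
\begin{itemize}
\item near $0$, $\Phi_{\gamma,\epsilon}(u)\sim u|u|^\gamma/\epsilon$, which gives the flatness in \cref{property:shaping-limit-zero};
\item the derivative of $\Phi_{\gamma,\epsilon}$ is positive away from $0$, giving monotonicity;
\item $\Phi_{\gamma,\epsilon}(u)-u=-\epsilon u/(|u|^\gamma+\epsilon)\to0$ at infinity.
\end{itemize}
Then $h_n=g\circ z_n$ with $z_n=(z_{n,a,b})_{a,b}:\mathbb{R}\to\mathbb{R}^N$ being $C^\gamma$. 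The multivariate chain rule (Faà di Bruno) then gives that $h_n$ is $\gamma$-times differentiable, since all mixed partials of $g$ up to order $\gamma$ are continuous.

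\textbf{Step 3: the outer layer.} Similarly, $\lambda(t)=\Phi_{\gamma,\epsilon}\bigl(\bigboxplus_n h_n(t)\bigr)$. Here \cref{property:smooth-max-differentiable} says $x\mapsto\Phi_{\gamma,\epsilon}\bigl(\bigboxplus_i x_i\bigr)$ is $\gamma$-times differentiable everywhere. Composing with the $C^\gamma$ map $t\mapsto(h_n(t))_n$ finishes the argument by the chain rule again.

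\textbf{Main obstacle.} The delicate point is making sure the chain rule applies at points where an inner $\qmin$ vanishes. The composite is $C^\gamma$ only because $\Phi\circ\bigoplus$ as a whole is $C^\gamma$, not because $\bigoplus$ alone is. We therefore must never split $\Phi$ from $\bigoplus$ when differentiating, and must rely on the continuity of the mixed partials, including their vanishing at the zero set, as stated in \Cref{lemma:diffcompos}.

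A secondary subtlety is that the inner shaping must not interfere with the outer one. This is fine because the outer statement holds for arbitrary real inputs $x_i$: smoothness of the outer map does not depend on the inputs being unshaped, only on the inputs being $C^\gamma$ functions of $t$.

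If \Cref{lemma:diffcompos} were only stated pointwise rather than with continuous partials, I would first upgrade it. The plan would be to show that the partials of $\Phi\circ f$ on $\mathcal{D}$ are bounded by powers $|f|^{\gamma+1-k}$ times the derivatives of $f$. The derivatives of $\bigoplus$ grow at most polynomially in $1/|f|$ near the zero set, so these bounds make the partials tend to $0$ there, which yields continuity.
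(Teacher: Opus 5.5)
Your proposal is correct and is essentially the paper's own proof. You use a fixed enumeration that makes the scalars $n^\top(a-b)$ of class $C^\gamma$, apply \Cref{lemma:diffcompos} to the shaped inner H\"older minima and \cref{property:smooth-max-differentiable} to the shaped outer H\"older maximum, and finish with the chain rule. You also make explicit the fixed indexing that the paper leaves tacit in its ``enumerate'' step, and this is genuinely needed: since $x\boxplus x\neq x$ for $x\neq 0$, reading $\tilde{\mathcal{N}}$ as a literal set would drop terms whenever a face normal of $\mathcal{A}(t)$ momentarily coincides with one of $\mathcal{B}(t)$, which can make $\lambda$ jump.

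There is one slip in your optional check of \Cref{def:ordershap}. For $\gamma=1$, $-\epsilon u/(|u|+\epsilon)\to\mp\epsilon$ as $u\to\pm\infty$, so \cref{property:shaping-limit-large-input} actually fails for $\Phi_{1,\epsilon}$. This is harmless here, because only \cref{property:shaping-differentiable} and \cref{property:shaping-limit-zero} enter the differentiability argument. The former, the $C^\gamma$ regularity of $\Phi_{\gamma,\epsilon}$ itself, is precisely the one your check omitted.
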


We summarize the three key properties of the HD‑SDF in the following theorem.

\begin{theorem}\label{thm:holder-differentiable-distance-properties}
    The HD-SDF $D_{\gamma,\epsilon}$ satisfies the following:
    \begin{theoremlist}
        \item \textit{Separation safety:}  $D_{\gamma,\epsilon}(\mathcal{A}, \mathcal{B}) >0 \implies d(\mathcal{A}, \mathcal{B})>0$.\label{property:HD-SDF-sign-bound-separation-safety}
        \item \textit{Differentiability:} if $\mathcal{A}(t)$ and $\mathcal{B}(t)$ undergo $\gamma$-times differentiable rigid motions, then $\lambda(t) = D_{\gamma,\epsilon}\big(\mathcal{A}(t),\mathcal{B}(t)\big)$ is $\gamma$-times differentiable with respect to $t$.\label{property:HD-SDF-differentiability}
        \item \textit{Rigid transformation invariance:} for any rigid transformation $T$, $D_{\gamma,\epsilon}(T(\mathcal{A}),T(\mathcal{B})) = D_{\gamma,\epsilon}(\mathcal{A},\mathcal{B})$. \hfill\qedsymbol\label{property:HD-SDF-rigit-motion-invariance}
    \end{theoremlist}
\end{theorem}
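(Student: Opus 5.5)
The theorem collects three properties, so I will take them one at a time. The first two follow directly from earlier propositions; only the third needs a new argument.

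\emph{Separation safety} is exactly \Cref{prop:holder-distance-sign-bound-of-sdf}. If I had to prove it, I would argue through signs. Both $\Phi_{\gamma,\epsilon}$ and the Hölder operators preserve sign: $\Phi_{\gamma,\epsilon}(u)$ has the sign of $u$, and \cref{property:smooth-min-oplus-sign-preservation} and \cref{property:smooth-max-sign-preservation} handle $\oplus$ and $\boxplus$. Peeling these layers off, $D_{\gamma,\epsilon}>0$ gives $\tilde D>0$, where $\tilde D$ is \eqref{eq:Dtilde}. That in turn yields a direction $n\in\tilde{\mathcal N}$ with $\min_{a,b} n^\top(a-b)>0$. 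Since $n$ is a unit vector, the max--min formula of the SDF theorem then gives $d\ge \min_{a,b} n^\top(a-b)>0$.

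\emph{Differentiability} is \Cref{lem:differentiability}. Its route is as follows. Each inner argument $n(t)^\top(a(t)-b(t))$ is $\gamma$-times differentiable in $t$, because every vertex, face normal and edge direction is a fixed vector transformed by $R_A(t)$ or $R_B(t)$ (and shifted by $s_A(t)$ or $s_B(t)$ for vertices). Next, \Cref{lemma:diffcompos} says $\Phi_{\gamma,\epsilon}\circ\bigoplus$ is $\gamma$-times differentiable in its arguments, and \cref{property:smooth-max-differentiable} says the same for $\Phi_{\gamma,\epsilon}\circ\bigboxplus$. The chain rule for compositions of $C^\gamma$ maps then finishes this part.

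\emph{Rigid transformation invariance} is the part needing an argument. Write $T(x)=Rx+s$ with $R\in\mathrm{SO}(3)$. Three facts follow directly:
\begin{itemize}
\item Vertices map to vertices: $\mathcal V(T(\mathcal A))=R\mathcal V(\mathcal A)+s$, and likewise for $\mathcal B$.
\item Outward face normals map to $R$ times the old normals.
\item Edge directions map to $R$ times the old directions, up to the arbitrary orientation choice; this ambiguity disappears once the $\pm$ in \eqref{eq:Ntilde} is applied.
\end{itemize}
Hence $\tilde{\mathcal N}(T\mathcal A,T\mathcal B)=R\,\tilde{\mathcal N}(\mathcal A,\mathcal B)$. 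For $n'=Rn$, $a'=Ra+s$ and $b'=Rb+s$ we get $n'^\top(a'-b')=n^\top R^\top R(a-b)=n^\top(a-b)$. So the multiset of inner arguments is unchanged, indexed in bijection through $R$.

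The one delicate point is that the $\bigoplus$ and $\bigboxplus$ values must not depend on how their arguments are ordered. Commutativity and associativity (\cref{property:smooth-min-oplus-commutative}, \cref{property:smooth-min-oplus-associative} and their $\boxplus$ counterparts in \Cref{cor:smooth-max-properties}) give exactly this. Applying the same $\Phi_{\gamma,\epsilon}$ at each level then gives equality of $D_{\gamma,\epsilon}$.

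I also need to check that duplicated normals in the $\pm$ union are treated as a set on both sides. They are, because $R$ is a bijection.
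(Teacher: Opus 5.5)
Your proposal is correct and matches the paper's proof. Parts (i) and (ii) are delegated to \Cref{prop:holder-distance-sign-bound-of-sdf} and \Cref{lem:differentiability}, and part (iii) rests on the invariance of each term $n^\top(a-b)$ under $n\mapsto Rn$, $a\mapsto Ra+s$, $b\mapsto Rb+s$, which you spell out (bijective indexing, order-independence via commutativity and associativity) more carefully than the paper's one-line argument. One small caution on your remark about treating duplicates as a set: for (ii), $\tilde{\mathcal N}$ must be read as an indexed family, as in the paper's enumeration $n_k(t)$, because $\boxplus$ is not idempotent (e.g.\ $x\boxplus x=2^{1/(\gamma+1)}x$ for $x>0$), so merging normals of $\mathcal A$ and $\mathcal B$ that happen to coincide at some relative orientation could make $\lambda(t)$ discontinuous; your bijection argument for (iii) works under either reading.
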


\section{Experiments}\label{sec:experiments}

\subsection{Time comparison}
\begin{figure}[t]
    \centering
    \includegraphics[trim=8px 8px 8px 8px, clip, width=\columnwidth]{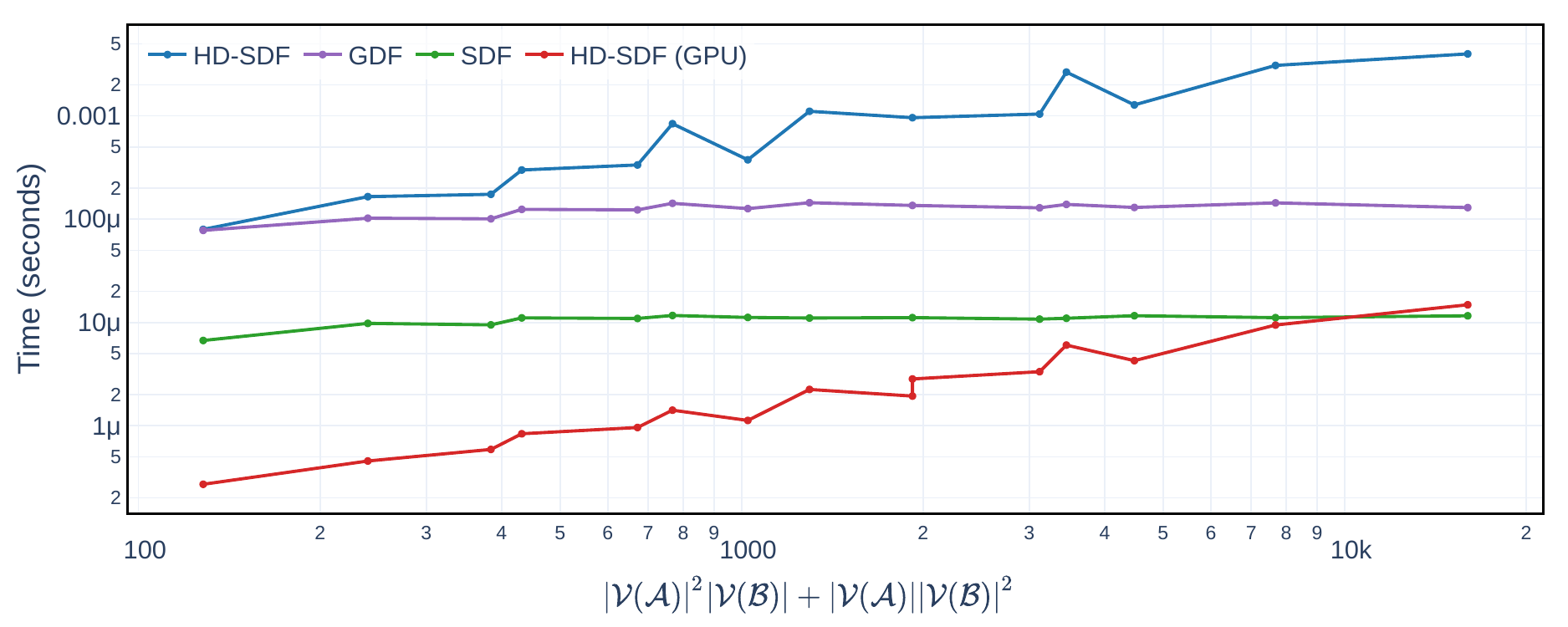}
    \caption{Average computation time on a log‑log scale for the Euclidean SDF (FCL~\cite{Pan2012}), GDF~\cite{Goncalves2026}, and our HD‑SDF executed on CPU only and with GPU acceleration. The horizontal axis is the approximate complexity $|\mathcal{V}(\mathcal{A})|^2|\mathcal{V}(\mathcal{B})|+|\mathcal{V}(\mathcal{A})||\mathcal{V}(\mathcal{B})|^2$.}
    \label{fig:slowdown-comparison}
\end{figure}
The complexity of computing an evaluation of $D_{\gamma,\epsilon}$ can be easily seen to be of order $\mathcal{O}\big(|\mathcal{V}(\mathcal{A})||\mathcal{V}(\mathcal{B})||\tilde{\mathcal{N}}(\mathcal{A},\mathcal{B})| \big)$ operations. As the numbers of faces $\mathcal{F}$ and edges $\mathcal{E}$ of a convex polyhedron grow linearly with the number of vertices~\cite{ziegler1995lectures}, $|\tilde{\mathcal{N}}(\mathcal{A},\mathcal{B})| = \mathcal{O}\big(|\mathcal{V}(\mathcal{A})|+|\mathcal{V}(\mathcal{B})|\big)$, and the total complexity becomes
\begin{align}
    \mathcal{O}\Big(|\mathcal{V}(\mathcal{A})|^2|\mathcal{V}(\mathcal{B})|+|\mathcal{V}(\mathcal{A})||\mathcal{V}(\mathcal{B})|^2 \Big), 
    \label{eq:computational-complexity}
\end{align}
that is, \emph{cubic} in the number of vertices. The evaluation consists almost entirely of independent dot products $n^{\top}(a-b)$ that can be computed in parallel, followed by associative and commutative reductions via $\qmin$ and $\qmax$. Hence the metric is naturally suited to GPU and multi‑core parallelization, and multiple object pairs can be processed simultaneously. This contrasts with the largely sequential iterative modified von Neumann procedure of \cite{Goncalves2026} and with algorithms for the exact Euclidean Signed Distance, which typically rely on closest‑feature searches and branching logic that hinder SIMD and GPU execution.

We measured runtime on all 15 unordered pairs of the five Platonic solids (tetrahedron, cube, octahedron, dodecahedron, icosahedron). For each pair we generated $N=100$ random poses for these objects. We timed $K=50$ distance queries per configuration and recorded the average per‑query time. Absolute times were obtained for: the exact Euclidean SDF from FCL~\cite{Pan2012}; the unsigned distance, denoted GDF, of \cite{Goncalves2026} (smoothing $h=0.1$, bulging factor $\varepsilon$ uniformly sampled from $[10^{-3},10^{-1}]$);  and our HD-SDF with $\gamma=2$, $\epsilon=10^{-4}$, evaluated on the CPU (C++ single-core) and on a GPU via PyTorch. The PC was equipped with an Intel Core i7-14700 (5.40 GHz), 32 GB RAM, and an NVIDIA GeForce RTX 4060.

Average times are plotted in \Cref{fig:slowdown-comparison} against the complexity measure~\eqref{eq:computational-complexity}, which spans three orders of magnitude. The SDF and GDF lines remain nearly flat because projection‑based methods scale roughly linearly with the vertex count, whereas the HD‑SDF has a fixed cubic complexity. At the same time, projection‑based methods depend on the objects' relative pose, while the HD‑SDF is entirely configuration‑agnostic. On the CPU, the HD‑SDF attains a worst‑case time of approximately $\qty{5}{\milli\second}$, corresponding to a $500\times$ slowdown relative to SDF and a $50\times$ slowdown relative to GDF, consistent with its cubic complexity. Despite this, the C++ implementation remains practical and offers the unique combination of signed output and differentiability. With GPU acceleration, processing a batch of $90^2$ queries, the average per‑query time drops dramatically, yielding a speedup over the CPU implementation of up to $13\times$ for tetrahedron pairs and roughly equivalent times in the worst‑case scenario, demonstrating the throughput gains achievable for large‑scale queries.

\subsection{Conservativeness}\label{sec:experiments-conservativeness}
\begin{figure}[t]
    \centering
    \includegraphics[trim=8px 8px 8px 8px, clip, width=\columnwidth]{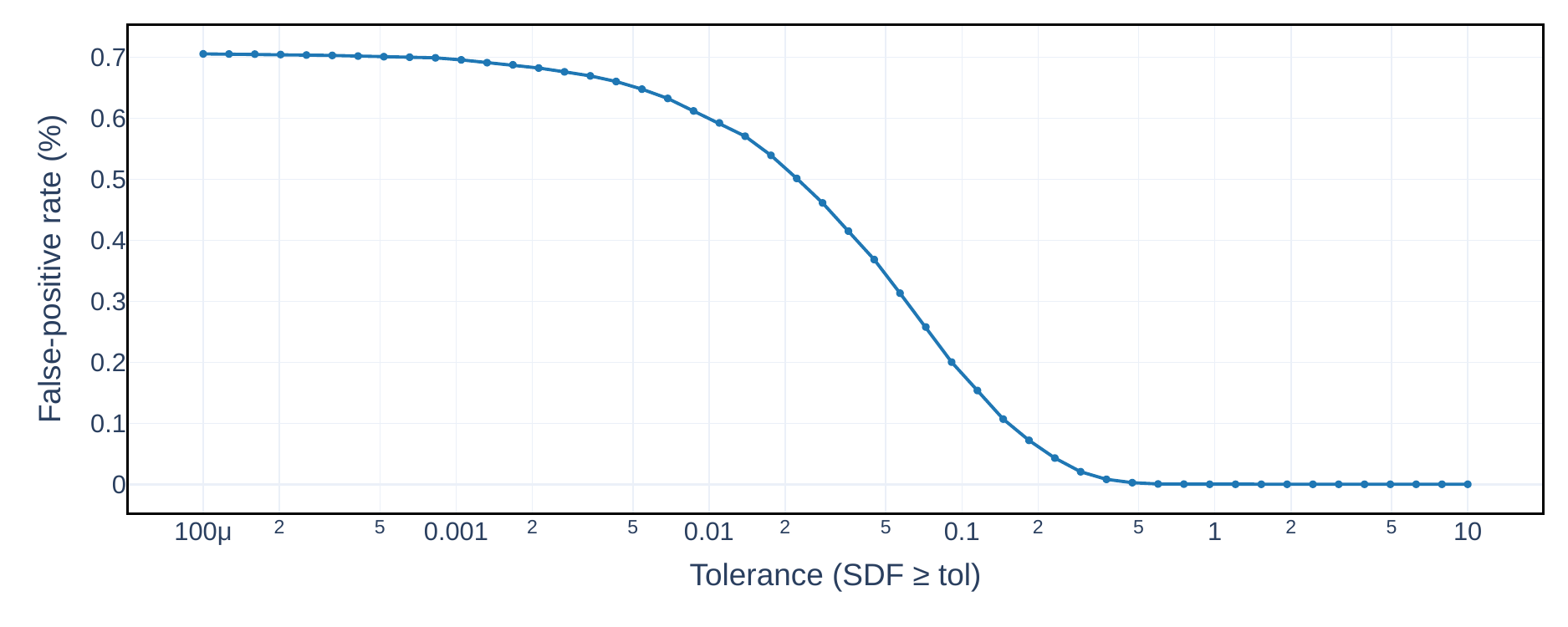}
    \caption{False‑positive rate of the HD‑SDF ($D_{\gamma,\epsilon}\le0$ while the true SDF $d> \texttt{tol}$) as a function of the tolerance $\texttt{tol}$.}
    \label{fig:false-positive}
\end{figure}
\Cref{prop:holder-distance-sign-bound-of-sdf} establishes that $D_{\gamma,\epsilon}>0$ guarantees strict separation, but a non‑positive value may occasionally occur when the true SDF is still positive: a false collision. To quantify this conservativeness, we evaluated $D_{\gamma,\epsilon}$ on $N=\num{500000}$ random configurations drawn from all 15 Platonic‑solid pairs, using the same randomization as in the time comparison. We fixed $\gamma=2$ and $\epsilon=10^{-4}$, and for each configuration compared the sign of $D_{\gamma,\epsilon}$ with the true SDF $d$ from FCL.

\Cref{fig:false-positive} shows the percentage of false positives ($D_{\gamma,\epsilon}\le0$ with $d > \texttt{tol}$) versus the separation tolerance $\texttt{tol}$. For a tolerance of $\qty{10}{\centi\meter}$, the false‑positive rate remains below $0.2\%$; even for a stricter tolerance of $\qty{100}{\micro\meter}$, the rate stays around $0.7\%$. Thus the metric practically never signals contact when the true gap exceeds about $\qty{10}{\centi\meter}$. Conservativeness can be further reduced by adding more continuously‑varying normals to $\tilde{\mathcal{N}}$ (as $\tilde{\mathcal{N}}$ approaches the entire unit sphere, false collisions vanish) but the simple set~\eqref{eq:Ntilde} already provides a very reasonable approximation at moderate cost.

\subsection{Robot Experiment}\label{sec:real-experiment}
\begin{figure}[t]
    \centering
    \includegraphics[width=\columnwidth]{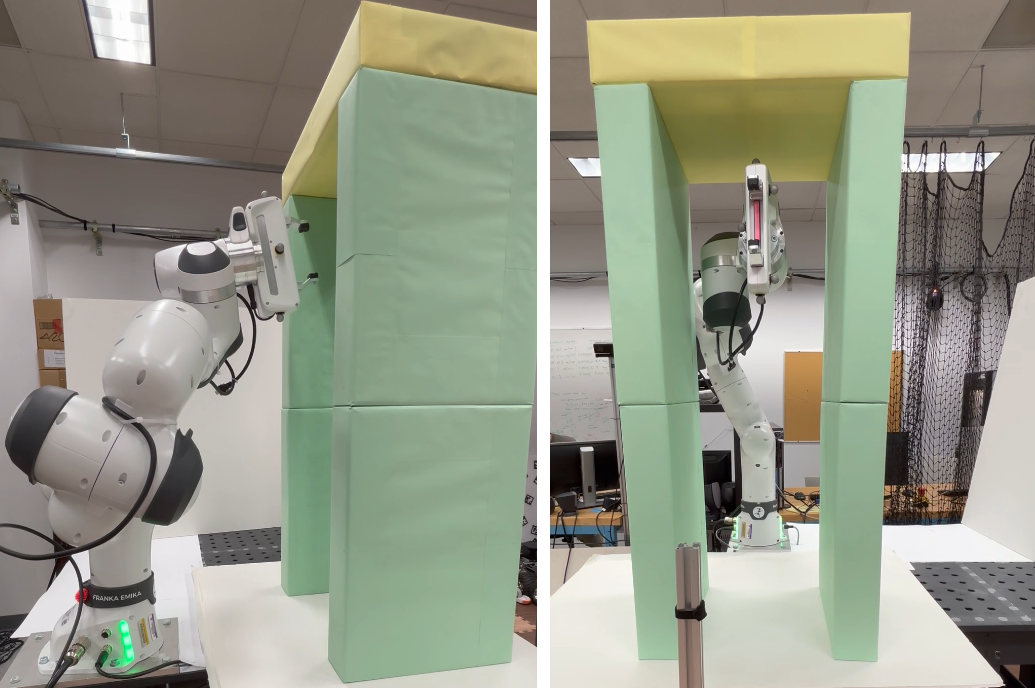}
    \caption{Experimental setup: the Franka Emika Panda entering the corridor (left) and reaching the target pose (right).}
    \label{fig:experiment-snapshots}
\end{figure}
\begin{figure}[t]
    \centering
    \includegraphics[width=\columnwidth]{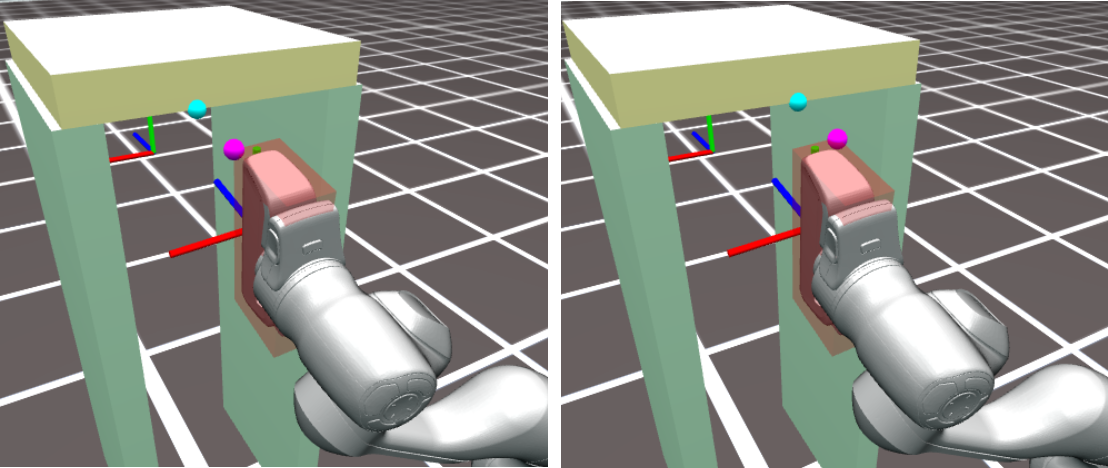}
    \caption{Two snapshots taken $\qty{40}{\milli\second}$ apart during the Euclidean run, showing the witness points (magenta/cyan). The witness points jump due to near‑parallelism, causing non‑differentiability of the distance and control chattering.}
    \label{fig:animation-snapshots}
\end{figure}
We replicate the Franka Emika Panda experiment of \cite{Goncalves2026}, where the robot must reach a target pose inside a corridor of three boxes while respecting collision, self‑collision, and joint‑limit constraints. A CBF‑based quadratic programming controller outputs joint‑velocity commands $u = \dot{q}$ and enforces, for each pair of objects $(i,j)$, the inequality
\begin{align*}
\frac{\partial \Delta_{ij}^{\text{obs}}}{\partial q}(q)  u \geq -\eta_{\text{obs}}\bigl(\Delta_{ij}^{\text{obs}}(q) - \delta_{\text{obs}}\bigr),
\end{align*}
where $\Delta_{ij}^{\text{obs}}$ is the chosen distance (HD‑SDF or Euclidean), $\eta_{\text{obs}}>0$, and $\delta_{\text{obs}}$ is a safety margin. The objective function and joint limit constraints is the same as in \cite{Goncalves2026}. For the Euclidean baseline we use $\delta_{\text{obs}} = \qty{0.03}{\meter}$. For the HD‑SDF we virtually expand each obstacle by $\qty{5e-2}{\meter}$ in all directions and set $\delta_{\text{obs}} = \num{-5e-3}$. This intentionally drives the distance into its negative range, demonstrating that the HD‑SDF remains smooth and differentiable there: a crucial property for control laws that regulate penetration depth, such as insertion, assembly, or compliant manipulation. The expansion creates a virtual safety zone; the negative margin permits slight entry before the CBF activates, effectively aligning the conservative zero‑crossing of the HD‑SDF with the true geometric boundary and preventing the barrier from being overly restrictive. The same CBF structure is applied for self‑collision avoidance (using the respective metric) and joint limits.

\begin{figure}[t]
    \centering
    \includegraphics[width=\columnwidth]{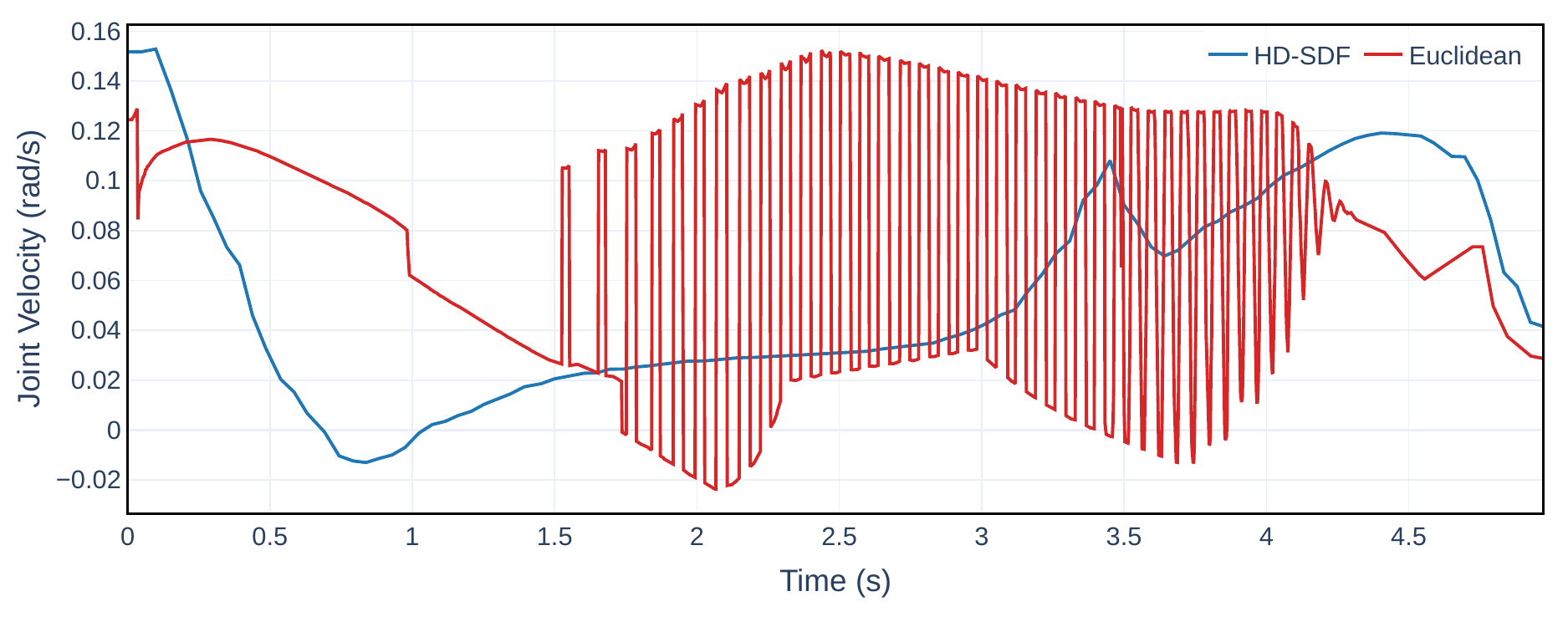}
    \caption{Joint‑velocity command for the last joint during the critical phase ($t = 0$ to $t = 5\,\text{s}$). The Euclidean distance causes strong chattering; the HD‑SDF yields a smoother control input.}
    \label{fig:experiment-control-input}
\end{figure}
All robot links are modeled as boxes, and the box‑based environment frequently creates parallel or nearly parallel faces, causing the Euclidean distance to become non‑differentiable and leading to rapidly switching witness points. This behavior is captured in the snapshots of \Cref{fig:animation-snapshots}: the close‑ups taken just $\qty{40}{\milli\second}$ apart show the magenta witness point jumping from one side of the box to the other. The resulting chattering‑like control inputs are evident for the last joint in \Cref{fig:experiment-control-input}. In contrast, the HD‑SDF produces continuous control signals, and the robot completes the task without vibration. The accompanying video provides a side‑by‑side comparison.

\section{Conclusion}\label{sec:conclusion}
We introduced the Hölder differentiable signed distance function (HD‑SDF), a closed‑form signed distance between convex polyhedra. Its differentiability order is freely chosen through the parameter $\gamma$, so the metric can be made as smooth as a given application requires. The key building blocks (the Hölder minimum and maximum) replace the non‑differentiable min/max of the classical SDF while retaining associativity, commutativity, and sign information. The resulting distance is mildly conservative: it never signals false separation, and the conservatism is small in practice. The metric is non‑iterative and inherently parallelizable; with GPU acceleration, it can be up to $13\times$ faster than a single‑core CPU evaluation for simple shapes and approaches the speed of the Euclidean SDF in the most complex cases. Simulations confirm that the false‑collision rate is negligible for typical safety margins, and a real robot experiment demonstrates smooth CBF‑based control without the chattering caused by the Euclidean distance. Future work will extend the approach to non‑convex shapes. %

\useRomanappendicesfalse
\appendices
\section{}%
\label{appendix:proofs}
\numberwithin{lemma}{section} %

\begin{lemma} \label{lemma:aczel} Let $x = [x_1 \ ... \ x_n]^{\top}$ and $\oplus$ be the $\qmin$. Then:
    \begin{align*}
        \bigoplus_i x_i = \begin{cases}
           \|x\|_{-(\gamma+1)}, & \min_i x_i > 0,\\
         0, & \min_i x_i = 0\\
        -\|(-x)_{+}\|_{\gamma+1}, & \min_i x_i < 0.
        \end{cases}
    \end{align*}
\end{lemma}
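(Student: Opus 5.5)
Since $\bigoplus_i x_i$ is only a shorthand whose meaning depends on associativity (\cref{property:smooth-min-oplus-associative}), and the lemma is likely used to establish that associativity, I would avoid relying on it. Instead I will prove by induction on $n$ that the left-nested fold $(\cdots((x_1\oplus x_2)\oplus x_3)\cdots)\oplus x_n$ equals the right-hand side $F(x)$ of the statement. Since $F$ is symmetric in its arguments, this gives associativity and commutativity as by-products. Write $p=\gamma+1$.

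\textbf{Base case $n=2$.} I check \Cref{def:smooth-min-operator} against $F$ case by case.
\begin{itemize}
\item If $x,y>0$, then $\mu_\gamma(x,y)=(x^{-p}+y^{-p})^{-1/p}=\|(x,y)\|_{-p}$.
\item If $\min(x,y)=0$ with both inputs nonnegative, the footnote convention gives $0$.
\item If one input is $0$ and the other is negative, the ``otherwise'' branch gives $\min(x,y)=-|\min|$. This equals $-\|(-x)_+\|_p$, because only one entry of $(-x)_+$ is nonzero.
\item If $x,y<0$, then $\mu_\gamma(-x^{-1},-y^{-1})=\bigl((-x)^{p}+(-y)^{p}\bigr)^{-1/p}$. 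Inverting and negating gives $-\bigl(|x|^p+|y|^p\bigr)^{1/p}=-\|(-x)_+\|_p$.
\item In the mixed-sign case, say $x<0\le y$, the definition gives $x$. This again equals $-\|(-x)_+\|_p$, since only $-x$ survives the positive part.
\end{itemize}

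\textbf{Induction step.} Let $s=F(x_1,\dots,x_{n-1})$ and compute $s\oplus x_n$ with the base-case formula. I split on the sign of $s$, which matches the sign of $\min_{i<n}x_i$ by construction of $F$.
\begin{itemize}
\item If $s>0$ and $x_n>0$, then $s^{-p}=\sum_{i<n}x_i^{-p}$. Hence $s\oplus x_n=\bigl(\sum_{i\le n}x_i^{-p}\bigr)^{-1/p}$.
\item If $s<0$, then $|s|^p=\sum_{i<n}((-x_i)_+)^p$. Whether $x_n<0$ or $x_n\ge 0$, the base case gives $-\bigl(|s|^p+((-x_n)_+)^p\bigr)^{1/p}=-\|(-x)_+\|_p$. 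This works because the entry $(-x_n)_+$ is zero exactly when $x_n$ is nonnegative, which is when the $\min$ branch returns $s$.
\item If $s>0$ and $x_n\le 0$, then $\min_i x_i=x_n$. The result is $0$ if $x_n=0$, and $x_n=-\|(-x)_+\|_p$ if $x_n<0$, since all other $x_i$ are positive.
\item If $s=0$, then $\min_{i<n}x_i=0$. Pairing $0$ with $x_n\ge0$ gives $0$, and pairing it with $x_n<0$ gives $x_n$. Both match $F$: if $x_n<0$, the positive part $(-x)_+$ has only the entry $-x_n$ nonzero, because the earlier entries are all nonnegative.
\end{itemize}

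\textbf{Main obstacle.} The only delicate point is the bookkeeping at zero: the boundary cases where $s=0$ or $x_n=0$ mix the footnote convention $\mu_\gamma(\cdot,0)=0$ with the ``otherwise'' branch. I would handle these in a small table to make sure every sign combination is covered exactly once. The rest is the direct algebraic identities $\bigl((\cdot)^{-p}\bigr)^{-1/p}$ and $\bigl((\cdot)^{p}\bigr)^{1/p}$ shown above.
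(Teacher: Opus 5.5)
Your proof is correct and follows the same route as the paper, whose entire proof is ``By induction.'' Your case analysis on the signs of $s$ and $x_n$, including the zero-boundary cases covered by the footnote convention and the ``otherwise'' branch, supplies the details that one-line proof leaves implicit. Working with the left-nested fold is sound but not strictly needed, since the paper proves associativity separately by inspecting the sign cases for $x,y,z$ and so does not rely on this lemma for it.
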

\begin{proof}
    By induction.
\end{proof}
\begin{lemma}
\label{lemma:soper}
    Let $y \in \mathbb{R}^m$, define $S_{m,\gamma}(y) \triangleq  \left(\sum_{i=1}^m y_i^{-(\gamma+1)}\right)^{-1}$. Then all the mixed derivatives of order $k \leq \gamma$ vanish at the origin $y=0$.  
\end{lemma}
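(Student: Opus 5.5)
The plan is to combine homogeneity with a direct size estimate on $S_{m,\gamma}$ and its derivatives, and then compute the derivatives at the origin as limits of difference quotients, inducting on the order $k$. Following the footnote to \Cref{def:smooth-min-operator}, I set $S_{m,\gamma}(y)\triangleq 0$ whenever some $y_i=0$. I work on the region where the formula is meaningful; as in its use via \Cref{lemma:aczel}, that means all $y_i$ share a sign, or the powers are read through $|y_i|$. Write $S=S_{m,\gamma}$ and $p=\gamma+1$.

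\textbf{Step 1: size estimate.} For every $i$ we have $\sum_j |y_j|^{-p}\ge |y_i|^{-p}$, hence
\begin{align*}
|S(y)|\le \min_i |y_i|^{p}\le \|y\|^{p},
\end{align*}
and equivalently $|S(y)|^{1/p}/|y_i|\le 1$ for all $i$. In particular $S$ is continuous at the origin and at the zero set $\{y_i=0\}$.

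\textbf{Step 2: structure of derivatives away from the zero set.} On the set where all $y_i\neq0$, $S=1/g$ with $g(y)=\sum_i y_i^{-p}$, and $g$ is separable. By the quotient rule and Faà di Bruno, a mixed derivative $\partial^\alpha S$ with $|\alpha|=k$ is a finite sum of terms of the form
\begin{align*}
c\,S^{j+1}\prod_{l=1}^{j} y_{i_l}^{-p-r_l},
\qquad \sum_l r_l=k .
\end{align*}
Each such term can be regrouped as
\begin{align*}
S^{(p-k)/p}\,\prod_l \bigl(S\,y_{i_l}^{-p}\bigr)\prod_l \bigl(S^{1/p}/y_{i_l}\bigr)^{r_l}.
\end{align*}
By Step 1 every bracketed factor has modulus at most $1$. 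This yields the key bound
\begin{align*}
|\partial^\alpha S(y)|\le C_\alpha |S(y)|^{(p-k)/p}\le C_\alpha \|y\|^{p-k},
\end{align*}
valid for $k\le \gamma<p$.

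\textbf{Step 3: induction on $k$.} I prove that for $0\le k\le\gamma$, $\partial^\alpha S$ exists everywhere (in particular on the coordinate hyperplanes and at $0$), equals $0$ wherever some $y_i=0$, and satisfies the bound of Step 2 everywhere. The case $k=0$ is Step 1. For $k-1\to k$, fix a point $y^\ast$ on the zero set and a direction $e_j$. The difference quotient $\bigl(\partial^{\alpha'}S(y^\ast+he_j)-0\bigr)/h$ is bounded by $C\,|S(y^\ast+he_j)|^{(p-k+1)/p}/|h|$. At the origin this is at most $C|h|^{p-k}\to0$, because $p-k=\gamma+1-k\ge1$. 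Continuity at the zero set then follows from the same bound, since $|S|^{(p-k)/p}\to 0$ there; I note this is in fact needed for the later use of \Cref{lemma:diffcompos}.

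\textbf{Main obstacle.} The hardest point is the difference quotient at a point $y^\ast\neq0$ of the zero set when moving along a direction $e_j$ in which $y^\ast_j\neq0$. There $S(y^\ast+he_j)=0$ identically, so this case is trivial. The genuinely delicate case is moving off the zero set along the coordinate $e_i$ for which $y^\ast_i=0$. For that case I would use the sharper bound $|S|\le |y_i|^p=|h|^p$ from Step 1 in place of $\|y\|^p$, which gives the quotient bound $C|h|^{p-k}\to0$. The regrouping in Step 2 has to be checked carefully so that exactly these factors of modulus at most $1$ appear.
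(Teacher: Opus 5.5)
Your proof is correct, and it takes a genuinely different route from the paper's.

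The paper argues purely by scaling. Since $S_{m,\gamma}(ty)=t^{\gamma+1}S_{m,\gamma}(y)$, Euler's theorem makes every $k$-th order mixed partial $(\gamma+1-k)$-homogeneous, and "setting $t=0$" gives $S^{(k)}_{m,\gamma}(0)=0$ for $k\le\gamma$. This is much shorter, but it presupposes that these derivatives exist at the origin. Literally setting $t=0$ even needs continuity there, because differentiating the scaling identity only yields $S^{(k)}(ty)=t^{\gamma+1-k}S^{(k)}(y)$ for $t\neq0$. Moreover, existence of iterated partials at $0$ requires them first to exist on the coordinate hyperplanes through $0$. Homogeneity gives no control of $S^{(k)}$ near those hyperplanes, and along rays it only gives $S^{(k)}(ty)\to0$ one ray at a time.

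Your regrouped Fa\`a di Bruno expansion, giving $|\partial^\alpha S|\le C_\alpha|S|^{(p-k)/p}\le C_\alpha\|y\|^{p-k}$, supplies exactly this missing uniform control. Your difference-quotient induction then proves existence, vanishing and continuity of all partials of order at most $\gamma$ on the whole zero set, not just at the origin. That is what \Cref{lemma:diffcompos} actually consumes: there one lets $x_Z\to0$, and the paper's notion of $k$-times differentiability demands continuous mixed partials.

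You also make explicit a domain restriction the paper leaves implicit: for even $\gamma$, the sum $\sum_i y_i^{-(\gamma+1)}$ can vanish at mixed-sign points. On the positive orthant your difference quotients become one-sided, which is all that application requires.

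In short, the paper's argument buys brevity, while yours costs the combinatorial bookkeeping of Step 2 but is self-contained and rigorous.
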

\begin{proof} 
    It is easy to see that $S_{m,\gamma}(y)$ is $(\gamma+1)$-homogeneous, that is, $S_{m,\gamma}(ty) = t^{\gamma+1}S_{m,\gamma}(y)$ for a scalar $t$. A consequence of Euler Theorem on homogeneous functions (see the only Lemma in \cite{Border2000EulerHomogeneous} is that any $k^{th}$-order mixed partial derivative of this function is  $(\gamma+1-k)$ homogeneous. Thus, using the symbol $S^{(k)}_{m,\gamma}$ for such generic mixed derivative, $S^{(k)}_{m,\gamma}(ty) = t^{\gamma+1-k}S^{(k)}_{m,\gamma}(y)$. The proof follows by setting $t=0$ to conclude $S^{(k)}_{m,\gamma}(0)=0$, which is possible if $k \leq \gamma$.
    \end{proof}
\begin{proof}[\cref{lem:smooth-min-oplus-basic-properties}]
    Associativity follows by inspection and considering the $2^3=8$ possible sign values for $x,y$ and $z$. The remaining properties are immediate.
\end{proof}
\begin{proof}[\cref{lem:smooth-min-differentiable}]
    The operation $(x_1,x_2) \mapsto x_1 \oplus x_2$ is continuous (easily verified from its expression). As $\oplus$ is associative (\Cref{property:smooth-min-oplus-associative}), $f$ is also continuous.

    We prove differentiability separately on the two open regions that make up $\mathcal{D}$.
    
    When $\min_i x_i > 0$, we use \Cref{lemma:aczel} and use the fact---which can be checked by inspection---that $p$-norms $\|x\|_p$ with $p<0$ are infinitely differentiable except when at least one of the $x_i = 0$. Since $\min_i x_i > 0$, differentiability holds.

    The case $\min_i x_i < 0$ is slightly more complex because the map $x \mapsto (x)_+$ is not even once differentiable. However, the map $x \mapsto \|(-x)_+\|^{\gamma+1}_{\gamma+1}$ is $\gamma$-times differentiable everywhere. Applying the outer function $-(\cdot)^{1/(\gamma+1)}$, recovering $-\|(-x)_+\|_{\gamma+1}$, preserves this differentiability except when $\|(-x)_+\|^{\gamma+1}_{\gamma+1}=0$. This happens when $\min_i x_i = 0$. But since $\min_i x_i < 0$, the condition is not met and $f$ is $\gamma$-times differentiable.
\end{proof}
\begin{proof}[\Cref{lemma:diffcompos}]
    The statement in the open set $f(x) \neq 0$ is trivial, due to \Cref{lem:smooth-min-differentiable} and the differentiability of $\phi$.
    
    We now analyze the function $g(x) = \phi\big(f(x)\big)$ around $f(x) = 0$. By the properties of a $\gamma^{th}$ order shaping function, we have the expansion $g(x) = C\sgn(f(x))|f(x)|^{\gamma+1} + o \left(  |f(x)|^{\gamma+2}\right)$, with a constant $C>0$ and where the remainder satisfies $\lim_{f(x) \rightarrow 0} o \left(  |f(x)|^{\gamma+2}\right)/f(x)^r = 0$ for all $r \leq \gamma+1$. For the purpose of studying differentiability at $f=0$, it suffices to work with the dominant term $g(x) = \sgn(f(x))|f(x)|^{\gamma+1}$.
    
    Using \cref{lemma:aczel} and \cref{property:smooth-min-oplus-sign-preservation}, we then have that:
        \begin{align*}
            g(x) = \begin{cases}
               \left( \sum_i 1/x_i^{\gamma+1}\right)^{-1}, & f(x) > 0,\\
             0, & f(x) = 0\\
            - \sum_i (-x_i)_+^{\gamma+1}, & f(x)< 0.
            \end{cases}
        \end{align*}
    We now show that all mixed partial derivatives of order up to $\gamma$ vanish at any point where $f(x)=0$.\footnote{In fact we only need to show that for both expressions (for $f(x)>0$ and $f(x)<0$), the mixed partial derivatives of order up to $\gamma$ agree when evaluated.}
    
    \textit{Case} $f(x) < 0$. Direct differentiation of $g$ in this branch reveals that mixed derivatives of order up to $\gamma$ vanish at a point in which $f(x)=0$.
    
    \textit{Case} $f(x) > 0$ is more intricate. Let $\mathcal{X}$ be the set of all points $x$ with all entries nonnegative and at least one $x_i$ is $0$, which is precisely the zero-level set $f(x)=0$. We need to evaluate the mixed derivatives in the limit when $x$ approaches any $x^* \in \mathcal{X}$. Fix an arbitrary $x^*\in\mathcal{X}$ and partition the indices $\{1,...,n\}$ into two sets: $\mathcal{Z}=\{i\;|\;x_i^*=0\}$ of size $m$ and $\mathcal{P}=\{i\;|\;x_i^*>0\}$ of size ($n-m$). The vector $x$ is then split respectively into vectors $x_Z\in\mathbb{R}^{m}$ and $x_P\in\mathbb{R}^{n-m}$.
    
    Using the function $S$ from \Cref{lemma:soper} the expression for $g$ on $f(x)>0$ can be rewritten as\footnote{If $n=m$, then we define $S_{0,\gamma}(x_P) = \infty$, recovering the original formula $g(x) = S_{n,\gamma}(x)$.}:
    \begin{equation}
        g(x) = \frac{S_{m,\gamma}(x_Z)S_{n-m,\gamma}(x_P)}{S_{m,\gamma}(x_Z)+S_{n-m,\gamma}(x_P)}.
    \end{equation}
    
    Let $g^{(k)}(x)$ denote, generically, any mixed partial derivative of order $k \leq \gamma$. It is easy to see that we can write
    \begin{equation}
    \label{eq:gk}
        g^{(k)}(x) = \frac{R_k(x)}{\big(S_{m,\gamma}(x_Z)+S_{n-m,\gamma}(x_P)\big)^{k+1}}.
    \end{equation}
    The numerator $R_k(x)$ involves mixed partial derivatives of $S_{m,\gamma}$ and $S_{n-m,\gamma}$ from order $0$ to $k$, and it has the following property: if all derivatives of $S_{m,\gamma}(x_Z)$ of order $0$ up to $k$ vanish at a point, then $R_k$ also vanishes at that point, which can be checked by inspection.
    
    We now evaluate at $x^*$. By definition, $S_{m,\gamma}(x_Z^*) = S_{m,\gamma}(0) = 0$ and $S_{n-m,\gamma}(x_P^*) > 0$, hence the denominator in \eqref{eq:gk} is not zero. Furthermore, \Cref{lemma:soper} guarantees that all mixed derivatives of $S_{m,\gamma}(x_Z)$ of order $0 \leq k \leq \gamma$ vanish at $x_Z=x_Z^*=0$, and so does $R_k(x^*)$. This means that $g^{(k)}(x^*)=0$ if $0 \leq k \leq \gamma$. This concludes the proof.
\end{proof}
\begin{proof}[\cref{lem:distance-sign-preservationB}]
    Let $\mathcal{C} \triangleq \mathcal A-\mathcal B$. The SDF uses all  $n$ such that $\|n\|=1$, while $D$ restricts the maximum to a subset $\mathcal{N}$ of this set, hence $D\le d$. We analyze the three sign cases of $d$.
    
    \noindent\textit{Positive.} SAT guarantees an $n^*\in\mathcal{N}$ with $\min_{c\in\mathcal{C}}(n^*)^\top c>0$, thus $d>0\implies D>0$. Conversely, $D>0\implies d \ge D >0$.
    
    \noindent\textit{Negative.} $d< 0\implies D\le d < 0$ immediately. For the reverse, assume $D< 0$ but $d\ge 0$. If $d> 0$, the \textit{positive} case gives $D> 0$, a contradiction. If $d=0$, then $0\in\partial\mathcal{C}$, and by the limiting case of SAT $\exists\, n\in\mathcal{N}$ with $\min_{c\in\mathcal{C}}(n^*)^\top c=0$, so $D\ge0$, again a contradiction. Thus $D<0\implies d<0$.
    
    \noindent\textit{Zero.} The two cases above yield $d=0\iff D=0$.
\end{proof}
    \begin{proof}[\Cref{prop:holder-distance-sign-bound-of-sdf}]
    Let $\tilde{D}(\mathcal{A},\mathcal{B})$ be the metric defined in \eqref{eq:Dtilde}. Clearly, we have that  $\tilde{D}(\mathcal{A},\mathcal{B}) \le d(\mathcal{A},\mathcal{B})$,  since $\tilde{\mathcal{N}}\subset\{n\in\mathbb{R}^3\,\vert\,\|n\|=1\}$

    Thus, from this result, it suffices to prove that $\sgn\bigl(D_{\gamma,\epsilon}(\mathcal{A}, \mathcal{B})\bigr)=\sgn\bigl(\tilde{D}(\mathcal{A}, \mathcal{B})\bigr)$. Let $\mathcal{C}=\mathcal{A}-\mathcal{B}$ and $g_n=\Phi_{\gamma,\epsilon}\left(\bigoplus_{c\in \mathcal{V}(C)} n^\top c\right)$. Then $D_{\gamma,\epsilon}(\mathcal{A},\mathcal{B}) = \Phi_{\gamma,\epsilon}\left(\bigboxplus_{n\in\mathcal{N}} g_n\right)$. By \cref{property:smooth-min-oplus-sign-preservation} and the sign preservation of $\Phi_{\gamma,\epsilon}$: $\sgn(g_n) = \sgn(\min_{c} n^\top c)$. Furthermore, from \cref{property:smooth-max-sign-preservation}, $\bigboxplus$ preserves sign and $\sgn(D_{\gamma,\epsilon}) = \sgn\left(\bigboxplus_{n\in\mathcal{N}} g_n\right) = \sgn(\tilde{D})$.
    \end{proof}%
\begin{proof}[\Cref{lem:differentiability}]
    The argument follows from the differentiability of everything involved and standard arguments about differentiability of composition of functions. 
    
    Enumerate the vertices $a_{i}(t) \in \mathcal{V}(\mathcal{A}(t))$, $b_{j}(t) \in \mathcal{V}(\mathcal{B}(t))$ and  $n_{k}(t) \in \tilde{\mathcal{N}}(\mathcal{A}(t),\mathcal{B}(t))$ for indices $i, j, k$. Define $s_{ijk}(t) \triangleq n_{k}(t)^{\top}\big(a_{i}(t){-}b_{j}(t)\big)$. Since $R_A, R_B, s_A, s_B$ are $\gamma$-times differentiable, $s_{ijk}$ is also $\gamma$-times differentiable.

    Define $g_{k}(t) \triangleq \Phi_{\gamma \epsilon} \big(\bigoplus_{i,j } s_{ijk}(t)\big)$. Since $\Phi_{\gamma,\epsilon}$ is a $\gamma^{th}$ order shaping function (according to \Cref{def:ordershap}), the map $x\mapsto \Phi_{\gamma \epsilon}(\bigoplus_i x_i)$ is $\gamma$-times differentiable in its arguments $x_i$ (\Cref{lemma:diffcompos}). Since the $s_{ijk}$ are $\gamma$-times differentiable in $t$, each $g_k(t)$ is $\gamma$-times differentiable in $t$.

    Finally, $\lambda(t) = \Phi_{\gamma \epsilon}\big(\bigboxplus_k g_k(t) \big)$ is a composition of a shaping function and $\qmax$ over a finite set of $\gamma$-times differentiable functions $g_k$. Using \Cref{def:ordershap} and \Cref{property:smooth-max-differentiable} shows that $\lambda(t)$ is $\gamma$-times differentiable in $t$, completing the proof.
\end{proof}
\begin{proof}[\Cref{thm:holder-differentiable-distance-properties}]
    \Cref{property:HD-SDF-sign-bound-separation-safety} is a direct consequence of \Cref{prop:holder-distance-sign-bound-of-sdf}. \Cref{property:HD-SDF-differentiability} follows from \Cref{lem:differentiability}. For \Cref{property:HD-SDF-rigit-motion-invariance}, note that if $\mathcal{A}, \mathcal{B}$ undergo the same rigid motion, the terms $n^{\top}(a-b)$ remains unchanged.
\end{proof}

\bibliographystyle{ieeetran}
\bibliography{IEEEabrv,refs.bib}
\cleardoublepage

\end{document}